\title{Large language model validity via enhanced conformal prediction methods}
\author{%
  John J. Cherian \\
  Department of Statistics\\
  Stanford University\\
  \texttt{jcherian@stanford.edu} \\
  \And
  Isaac Gibbs \\
  Department of Statistics \\
  Stanford University \\
  \texttt{igibbs@stanford.edu} \\
  \And
  Emmanuel J. Cand\`{e}s \\
  Department of Statistics \\
  Department of Mathematics \\
  Stanford University \\
  \texttt{candes@stanford.edu} 
}
\newcommand{\mme}[0]{\mathbb{E}}
\newcommand{\mmp}[0]{\mathbb{P}}
\newcommand{\mmr}[0]{\mathbb{R}}
\newcommand{\bone}[0]{\mathbbm{1}}
\begin{document}

\maketitle

\begin{abstract}
We develop new conformal inference methods for obtaining validity guarantees on the output of large language models (LLMs). Prior work in conformal language modeling identifies a subset of the text that satisfies a high-probability guarantee of correctness. These methods work by filtering claims from the LLM's original response if a scoring function evaluated on the claim fails to exceed a threshold calibrated via split conformal prediction. Existing methods in this area suffer from two deficiencies. First, the guarantee stated is not conditionally valid. The trustworthiness of the filtering step may vary based on the topic of the response. Second, because the scoring function is imperfect, the filtering step can remove many valuable and accurate claims. We address both of these challenges via two new conformal methods. First, we generalize the conditional conformal procedure of Gibbs et al. (2023) in order to adaptively issue weaker guarantees when they are required to preserve the utility of the output. Second, we show how to systematically improve the quality of the scoring function via a novel algorithm for differentiating through the conditional conformal procedure. We demonstrate the efficacy of our approach on biography and medical question-answering datasets.
\end{abstract}

\section{Introduction}
Large language models (LLMs) are a breakthrough in machine learning. In addition to their extraordinary performance on natural language processing benchmarks, LLMs such as ChatGPT and Gemini are now used by hundreds of millions of users around the world \cite{verge2023chatgpt}. But even though these models match or even surpass human performance on an increasingly complex and diverse set of tasks, their reliability remains in doubt. For example, LLMs often confidently hallucinate facts that do not exist, and can generate toxic outputs that may offend or discriminate \cite{nadeau2024benchmarking}. This ``misalignment'' between user goals and model behavior hinders LLM deployment in settings where the potential for AI assistance appears highest, e.g., legal work or customer service interaction \cite{nytimes2023avianca}. 

Since an LLM output is not always trustworthy, a growing body of work aims to quantify uncertainty regarding a given output's validity. While there are many approaches to this problem \cite{varshney2023stitch, azaria2023internal, detommaso2024multicalibration, manakul2023selfcheckgpt}, this paper considers a particularly popular method for black-box uncertainty quantification: conformal inference \cite{VovkBook, angelopoulos2022learn, angelopoulos2024conformal}. Conformal inference provides a generic methodology for transforming the predictions of any modeling procedure into valid prediction sets that are guaranteed to contain the true outcome with high probability. Several recent papers have applied conformal inference to define a set of LLM responses that contains at least one factual response with high probability \cite{angelopoulos2024conformal, kumar2023conformal, quach2024conformal, ye2024benchmarking}. But while generating a candidate set of outputs may be a reasonable strategy in some question-answering problems, it is not a generalizable approach for the diverse and unstructured tasks faced in real-world deployment.

More recently, \citet{mohri2024language} propose to forgo sets of LLM outputs and instead utilize conformal inference to filter out invalid components of the LLM response. At a high level, given an LLM generation parsed into a set of distinct sub-claims, their method censors all sub-claims for which a pre-defined scoring function lies below some threshold. \citet{mohri2024language} then show how to calibrate this threshold such that the retained claims are factual with high probability.

While these methods represent a promising step towards usable guarantees for LLM outputs, they are not yet practical. One limitation is that the guarantee attained by previous methods only holds \emph{marginally} over a random test prompt. The true probability of output correctness may then vary substantially based on the prompt's characteristics. For example, we show in \Cref{sec:experiment} that the probability of output correctness (even after applying the conformal factuality method) is substantially lower for responses whose subjects are likely to be underrepresented in the model's training corpus. Second, existing methods remove too many claims to be practically useful. Recall that we remove sub-claims for which some pre-defined score falls below a calibrated threshold. If this score is perfect, only false claims will be censored. In practice, however, these scores are only weakly correlated with the ground truth. As \Cref{fig:text-comparison} demonstrates, a high probability factuality guarantee can require the removal of a significant proportion of the generated text.\footnote{We note that when conformal prediction is applied to more typical supervised learning tasks, this problem is equivalent to the challenge of prediction set ``inefficiency,'' i.e. large prediction set size.} The conformal guarantee is not useful if the filtered response has limited value for the end-user.

\begin{figure}
    \centering
    \includegraphics[width=\textwidth]{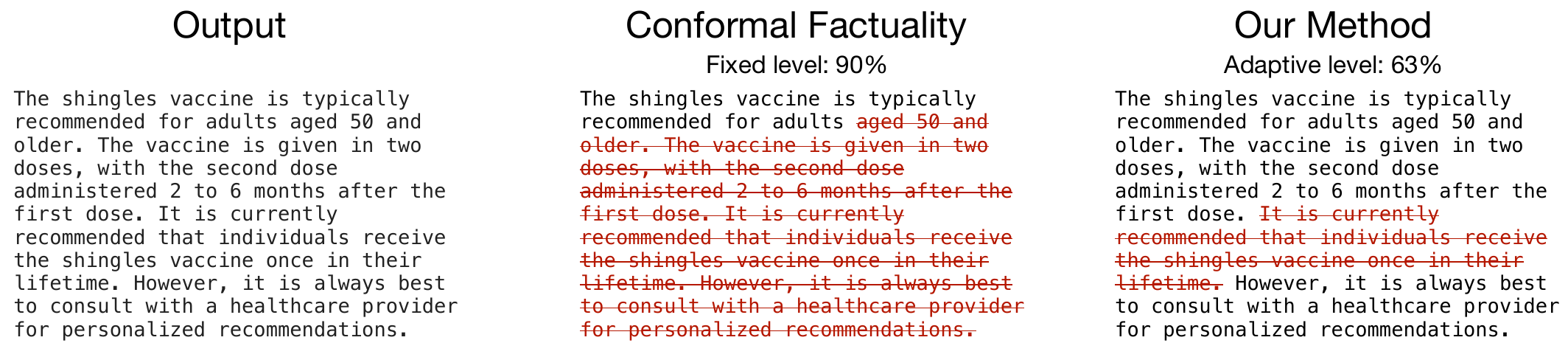}
    \caption{The left panel displays the output of GPT-3.5-Turbo for the prompt ``How often is a shingles vaccine required?'' The first filtered output (center) is calibrated using the frequency score (see \Cref{sec:app_claim_score}) and the marginally valid conformal factuality method of \citet{mohri2024language} at a fixed level of 90\%. The second filtered output (right) is calibrated using a score obtained via our conditional boosting procedure (\Cref{sec:boosting}) at a level of 63\%, which is chosen and calibrated using our adaptive method (\Cref{sec:level-adaptive}) to approximately ensure that at least 70\% of the claims are retained. Both filtered outputs are guaranteed to include no false claims with the stated probability. 
    }
    \label{fig:text-comparison}
\end{figure}

\subsection{Summary of contributions} \label{sec:preview}
In this subsection, we will preview and summarize our results. A more complete description of our theory and experimental setup is deferred to \Cref{sec:method,sec:experiment}. 

As in prior literature on conformal language modeling, we will assume the existence of an annotated calibration set of $n$ i.i.d.~prompt-response-claim-annotation tuples, $\{(P_i, R_i, \mathbf{C}_i, \mathbf{W}_i)\}_{i = 1}^n$. The vector $\mathbf{C}_i$ is obtained by using an LLM to parse the response into a list of scorable sub-claims, while $\mathbf{W}_i$ might correspond to human verification of the underlying factuality of each claim. To simplify notation, we will refer to these tuples using the shorthand, $\mathbf{D}_i$.

\begin{figure}[ht]
    \centering
    \includegraphics[width=\textwidth]{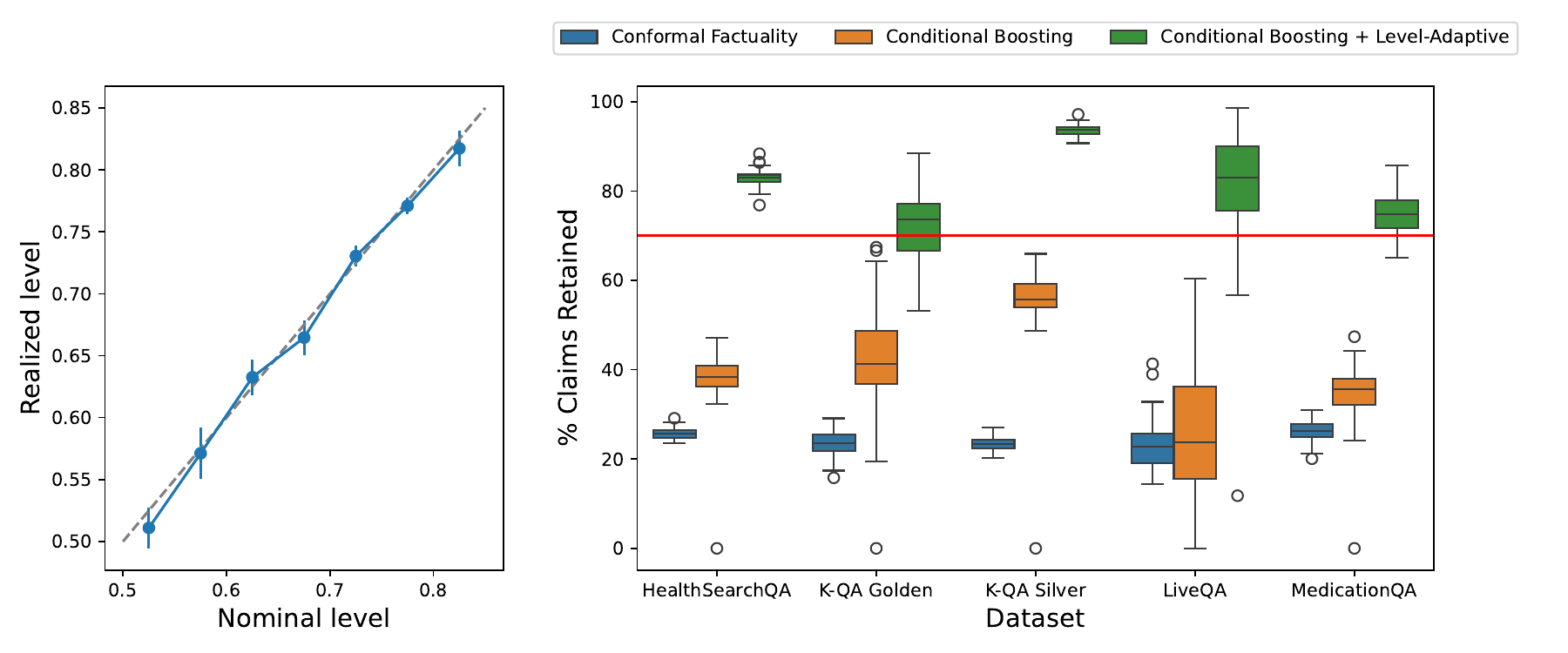}
    \caption{Empirical demonstration of our methods. The panels display results for our conditional boosting and level-adaptive methods. We aim to issue outputs with $0$ factual errors, and for the latter method, we choose the level with the objective of retaining at least 70\% of the original claims in the prompt. The left panel compares the binned nominal probabilities of factuality reported by our method against the realized probability of factuality for data points belonging to each bin. These probabilities are estimated using $500$ test points over 100 calibration-test splits. The plotted bins, which are also given as inputs to our method, are $[0.5, 0.55], [0.55, 0.6],\dots,[0.8, 0.85]$. Finally, the right-hand panel displays the claim retention obtained with unboosted scores (blue), boosted scores (orange), and boosted scores + level-adaptive CP (green). The first two methods are implemented at a fixed error rate of $\alpha = 0.1$. Boxplots in this panel show the distribution of retained claims for 100 calibration-test splits with each containing 2354 calibration points and 500 test points.}
    \label{fig:alpha_calibration}
\end{figure}

At first glance, the twin goals we have outlined for this paper, improved conditional validity \emph{and} enhanced quality of filtered outputs, appear to be irreconcilable. Indeed, prior work establishes that precise conditional guarantees in black-box uncertainty quantification require larger prediction set sizes, i.e., smaller filtered outputs \cite{Barber2020, Vovk2012}. We contribute two methods to mitigate this trade-off, thus enabling the practical application of conformal prediction to LLMs.

Our first method, which we call \textbf{conditional boosting}, allows for the automated discovery of superior claim scoring functions via differentiation through the conditional conformal algorithm of \citet{gibbs2023conformal}. Automated conformal score improvement was introduced by \citet{stutz2021learning}; their paper shows how to minimize conformal prediction set size in a classification setting by differentiating through the marginally valid split conformal algorithm. As we show, however, in \Cref{sec:experiment}, optimizing the score function subject only to a marginal coverage constraint can lead to poor conditional properties.

Optimizing through the conditional conformal algorithm is not straightforward. Our key technical contributions are a proof that (under mild assumptions) the cutoff output by the conditional conformal method is differentiable and a computationally efficient method for computing this derivative. By running gradient descent using this algorithm we discover new scores that enable greater claim retention.

The right panel of \Cref{fig:alpha_calibration} demonstrates the efficacy of our method. Here, we use boosting to learn an optimal linear combination of four candidate scoring functions. We compare the learned, boosted scores (orange) against a baseline method (blue) that uses the ``frequency'' scoring method developed by \citet{mohri2024language}. As the figure shows, the boosted score allows for higher claim retention across all datasets (mean retention of 39\% vs.~24\% for the boosted vs.~unboosted scores).

Our second method, which we call \textbf{level-adaptive conformal prediction}, allows the validity of the conformal output to depend on characteristics of the queried prompt. In our LLM experiments, we adapt the level, i.e., the claimed probability of correctness, individually to each prompt in order to ensure that issued outputs retain at least $70\%$ of the original set of sub-claims. For example, in \Cref{fig:text-comparison}, we prompt GPT-3.5-Turbo to output a response to a question from the \textbf{MedicationQA} dataset \cite{BenAbacha:MEDINFO19}. Outputting a filtered response that achieves the stated factuality criterion with probability $90\%$ requires near-complete censorship, but by relaxing the level to $63\%$ using our method, we can preserve almost the entire response.

Given that we are now issuing an output-adaptive probability of correctness, it is crucial that our issued probability is \emph{calibrated}. Calibration requires that the true probability of correctness matches the issued one. For example, if a weather forecaster claims that there is a $70\%$ chance of rain, their forecast is calibrated if it actually rains for $70\%$ of the days on which a $70\%$ forecast is issued.

\Cref{fig:alpha_calibration} displays the advantages of our approach to this problem. First, the left panel of \Cref{fig:alpha_calibration} verifies that the level-adaptive probabilities we report are empirically well-calibrated. Second, the right panel of \Cref{fig:alpha_calibration} quantitatively demonstrates the improved claim retention of our method and verifies that for each dataset included in the \textbf{MedLFQA} benchmark \cite{jeong2024olaph} our level-adaptive conformal prediction retains at least 70\% of the original output's claims in most examples. Finally, by combining our level-adaptive and conditional boosting methods, we retain most claims \emph{and} output non-trivial guarantees of response factuality; the left panel shows that the issued probabilities vary between $50$ and $85\%$. By contrast, while the fixed level method guarantees a $90\%$ probability of correctness, the method retains very little of the original LLM output.


To emphasize that these results are accompanied by formal guarantees, we preview one instantiation of our theory here. Since it is well-known that exact conditional guarantees in conformal inference are impossible to achieve without strong distributional assumptions \cite{Barber2020, Vovk2012}, we present an interpretable alternative: group-conditional calibration.\footnote{ \Cref{eq:level_adaptive_preview} closely resembles a measure of calibration error known in the theoretical computer science literature as multicalibration \cite{hebert2018multicalibration}. Formally, however, multicalibration is defined conditionally on the calibration dataset, while the expectation in \eqref{eq:level_adaptive_preview} is over both the calibration and test points.} For example, in this dataset, we might group questions by medical area or data provenance; we would then hope to show that across health conditions or data sources, the claimed probability of factuality matches the true probability of factuality. 

\Cref{eq:level_adaptive_preview}, which follows from \Cref{thm:adapt_alpha_gen_loss}, presents one guarantee that our method can satisfy. Here, we denote the (random) output of our data-adaptive level function by $\alpha_{n + 1}$ and our filtered set of claims by $\hat{F}(\mathbf{C}_{n + 1})$. Our method then satisfies the following guarantee simultaneously over groups $G \in \mathcal{G}$ (e.g., prompt topic, data provenance) and some discretization of $[0,1]$ given by the sub-intervals $I$ (e.g., all sub-intervals with endpoints belonging to $\{0,0.1,\dots,1\}$), 
\begin{equation}
    \P\left (\hat{F}(\mathbf{C}_{n + 1}) \text{ is factual} \mid \alpha_{n + 1} \in I, P_{n + 1} \in G \right ) = \E[\alpha_{n + 1} \mid \alpha_{n + 1} \in I,  P_{n + 1} \in G]. \label{eq:level_adaptive_preview}
\end{equation}
More concretely, \eqref{eq:level_adaptive_preview} establishes that the issued probabilities are well-calibrated in the following sense: among similar prompts, the outputs that we claim to be factually correct with probability, say, between 70 and 80\% will be \emph{actually} factual between 70 and 80\% of the time. In Section \ref{sec:methods_alternative_losses}, we show how our framework can be adapted to guarantee that the LLM's response satisfies other alignment targets beyond factual accuracy.


The remainder of the paper is outlined as follows. In \Cref{sec:bkgd}, we introduce the formal notation of our paper and contextualize our approach by reviewing related work in conformal inference. \Cref{sec:method} then presents our new methodology for conformal language modeling. We first generalize the conditional conformal procedure of \citet{gibbs2023conformal} to obtain high-probability control of arbitrary monotone risks. We then state and give intuition for the key technical results underpinning our level-adaptive and boosting methods. \Cref{sec:experiment} outlines synthetic experiments displaying the improvements of our approach over existing methods, gives a more detailed description of the experiment presented in \Cref{fig:alpha_calibration} above, and presents another experiment that filters short biographies output by an LLM. 

We release a filtered version of the \textbf{MedLFQA} benchmark that removes some non-health-related prompts, the generated and parsed text used to run our experiments, as well as the notebooks used to produce the figures in this paper at \href{https://www.github.com/jjcherian/conformal-safety}{github.com/jjcherian/conformal-safety}. We also update our Python package for conditional conformal inference to support level-adaptive conformal prediction. This package is available to download at \href{https://www.github.com/jjcherian/conditional-conformal}{github.com/jjcherian/conditional-conformal} and can be installed from PyPI. 

\section{Background and related work}\label{sec:bkgd}

\subsection{Conformal prediction with conditional guarantees}\label{sec:cc_background}

Split conformal prediction provides a generic procedure for transforming the outputs of any black-box model into valid prediction sets \cite{Papadopoulos2002, VovkBook}. Let $\{(X_i,Y_i)\}_{i=1}^{n+1}$ be a set of covariate-ground truth pairs  where $X_{n+1}$ denotes a test value for which we would like to output a response. Then, split conformal outputs a prediction set $\hat{C}(X_{n+1})$ such that 
\begin{equation}\label{eq:split_coverage}
\mmp(Y_{n+1} \in \hat{C}(X_{n+1})) = 1-\alpha,
\end{equation}
for any user-specified value $\alpha \in (0,1)$.

While powerful, the guarantee given in \eqref{eq:split_coverage} only holds on-average over the test value. Critically, in the LLM context we consider, this means that methods calibrated using split conformal prediction run the risk of displaying systematically worse performance on the most safety-critical examples. 

This problem is addressed in \citet{gibbs2023conformal}, which introduces a novel target for obtaining coverage conditional on covariate information. In their work, they observe that exact covariate-conditional coverage can also be expressed as a marginal guarantee over any measurable function $f \in \mathcal{F}$, i.e.,
\begin{align*}
\P(Y_{n + 1} \in \hat{C}(X_{n + 1}) &\mid X_{n + 1}) = 1 - \alpha \\
&\iff \\
\E[f(X_{n + 1}) \cdot (\bone \{Y_{n + 1} \in \hat{C}(X_{n + 1})\} &- (1 - \alpha))] = 0 &&\text{for all measurable $f$}.
\end{align*}
Since exact distribution-free covariate-conditional coverage requires the analyst to issue vacuous prediction sets \cite{Vovk2012}, \citet{gibbs2023conformal} design a prediction set that satisfies the same marginal guarantee over a user-specified function class $\mathcal{F}$, i.e., 
\begin{align}\label{eq:cc_target}
\E[f(X_{n + 1}) \cdot (\bone \{Y_{n + 1} \in \hat{C}(X_{n + 1})\} - (1 - \alpha))] = 0 &&\text{for all $f \in \mathcal{F}$}.
\end{align}

To make this guarantee concrete, consider the case where $\mathcal{F} := \{(\bone \{X \in G\})^\top \beta \mid \beta \in \mathbb{R}^{|\mathcal{G}|}\}$ for some set of subgroups $\mathcal{G}$. Then, \eqref{eq:cc_target} is exactly equivalent to group-conditional coverage, i.e., $\mmp(Y_{n+1} \in \hat{C}(X_{n+1}) \mid X_{n+1} \in G) = 1-\alpha$ for all $G \in \mathcal{G}$. 

To understand their construction, let $S(X, Y) \in \mathbb{R}$ denote a conformity score that measures how well $Y$ ``conforms'' to an estimate of its value. A typical choice in the regression setting is $S(X, Y) = |Y - \hat{\mu}(X)|$ for some fixed regression function $\hat{\mu}(\cdot)$; in the classification setting, we might choose $S(X, Y) = 1 / \hat{\pi}(Y \mid X)$ for some estimated conditional probabilities $\hat{\pi}$. 

\citet{gibbs2023conformal} estimate a high-probability upper bound for these scores by fitting an augmented quantile regression in which the unknown test score, $S(X_{n+1},Y_{n+1})$ is replaced by an imputed value $S$. Formally, let $\ell_{\alpha}(r) := (1-\alpha)[r]_+ + 
\alpha [r]_-$ denote the pinball loss. Then, they define
\begin{equation}\label{eq:original_qr}
g_S = \text{argmin}_{g \in \mathcal{F}} \frac{1}{n+1} \sum_{i=1}^n \ell_{\alpha}(S(X_i, Y_i) - g(X_i)) + \frac{1}{n+1} \ell_{\alpha}(S - g(X_{n+1})),
\end{equation}
and output the prediction set given by $\hat{C}(X_{n+1}) := \{y : S(X_{n+1},y) \leq g_{S(X_{n+1},y)}(X_{n+1})\}$.

Since $\hat{C}(X_{n+1})$ can be mildly conservative, \citet{gibbs2023conformal} also define a smaller randomized prediction set, $\hat{C}_{\text{rand.}}(X_{n+1})$ that achieves exact coverage. We will typically prefer to work with this set, but defer a detailed definition of its construction to the Appendix. As the following theorem shows, this randomized set achieves the guarantee stated in \eqref{eq:cc_target}.

\begin{theorem}[Proposition 4 of \citet{gibbs2023conformal}]\label{thm:fixed_alpha_adapt_cov}
    Let $\mathcal{F} = \{\Phi(X)^\top\beta : \beta \in \mmr^d\}$ denote any finite dimensional linear class.  Assume that $\{(X_i,S_i)\}_{i=1}^{n+1}$ are exchangeable and that solutions to \eqref{eq:original_qr} and its dual are computed symmetrically on the input data. Then, for all $f \in \mathcal{F}$,
    \[
    \mme[f(X_{n+1})(\bone\{S_{n+1} \in \hat{C}_{\text{rand.}}(X_{n+1})\} - (1-\alpha))] = 0.
    \]
\end{theorem}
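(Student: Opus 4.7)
The plan is to reduce the assertion to a first-order (KKT) condition of the quantile-regression program \eqref{eq:original_qr} and then invoke exchangeability of $\{(X_i, S_i)\}_{i=1}^{n+1}$. Parameterize $g \in \mcf$ by $\beta \in \mmr^d$ via $g(x) = \Phi(x)^\top \beta$; evaluated at the true test score (i.e., setting $S = S_{n+1}$), the objective of \eqref{eq:original_qr} is a finite sum of convex functions of $\beta$. By the chain rule applied to $\ell_\alpha$, at the minimizer $\beta^*$ there must exist subgradients $s_i^* \in \partial \ell_\alpha(S_i - g_{S_{n+1}}(X_i))$ satisfying
\[
\sum_{i=1}^{n+1} \Phi(X_i)\, s_i^* = 0.
\]
Taking inner products against any $\gamma \in \mmr^d$ turns this into $\sum_{i=1}^{n+1} f(X_i)\, s_i^* = 0$ for every $f \in \mcf$; this is the only algebraic identity the argument needs.

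Next I would translate the coverage indicator at the test point into $-s_{n+1}^*$. Recall $\partial \ell_\alpha(r) = \{-\alpha\}$ for $r < 0$, $\{1-\alpha\}$ for $r > 0$, and $[-\alpha, 1-\alpha]$ at $r = 0$, and that $S_{n+1} \in \hat{C}(X_{n+1})$ iff $S_{n+1} \leq g_{S_{n+1}}(X_{n+1})$. On the event $\{S_{n+1} \neq g_{S_{n+1}}(X_{n+1})\}$ the subgradient is single-valued, and a direct case check gives $\bone\{S_{n+1} \in \hat{C}(X_{n+1})\} - (1-\alpha) = -s_{n+1}^*$. On the boundary event the subdifferential is an interval, and the role of the randomization in $\hat{C}_{\text{rand.}}$ is precisely to include $S_{n+1}$ with the conditional probability that makes $\bone\{S_{n+1} \in \hat{C}_{\text{rand.}}(X_{n+1})\} - (1-\alpha) = -s_{n+1}^*$ valid for a measurable selection $s_{n+1}^* \in [-\alpha, 1-\alpha]$ from the subdifferential.

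The final step is exchangeability. Because the primal and dual solutions are computed symmetrically from the multiset of inputs, the array $\{(X_i, s_i^*)\}_{i=1}^{n+1}$ inherits the exchangeability of $\{(X_i, S_i)\}_{i=1}^{n+1}$; in particular each pair is distributed as $(X_{n+1}, s_{n+1}^*)$. Assembling the two previous steps,
\begin{align*}
\mme[f(X_{n+1})(\bone\{S_{n+1} \in \hat{C}_{\text{rand.}}(X_{n+1})\} - (1-\alpha))] &= -\mme[f(X_{n+1})\, s_{n+1}^*] \\
&= -\frac{1}{n+1}\, \mme\!\left[\sum_{i=1}^{n+1} f(X_i)\, s_i^*\right] = 0,
\end{align*}
where the last equality is the KKT identity from the first step.

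I expect the chief subtlety to be the boundary analysis, where two things must match up: (i) one must be able to pick a measurable, symmetric-in-the-inputs selection $s_i^*$ from the set-valued subdifferential so that $\sum_i \Phi(X_i)\, s_i^* = 0$ holds as an equality, and (ii) the randomized inclusion rule in $\hat{C}_{\text{rand.}}$ must realize exactly the conditional law making $\bone\{\cdot\} - (1-\alpha) = -s_{n+1}^*$ hold. Condition (i) is exactly what the ``symmetric computation of the primal and dual'' hypothesis buys, and (ii) is delivered by the explicit construction of $\hat{C}_{\text{rand.}}$ in the appendix of \citet{gibbs2023conformal}; once both are in place, the theorem collapses to the one-line exchangeability calculation displayed above.
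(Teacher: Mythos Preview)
Your proposal is correct and follows essentially the same argument as the paper (and Gibbs et al.): your subgradients $s_i^*$ are precisely the dual variables $\eta_i$ of the paper's LP formulation, your KKT identity $\sum_i \Phi(X_i)\, s_i^* = 0$ is the Lagrangian stationarity condition $\sum_i \eta_i^{S_{n+1}} f(X_i) = 0$ (with $\mathcal{P}=0$), and the randomization step $\mme_U[\bone\{\cdot\}-(1-\alpha)] = -\eta_{n+1}$ together with the exchangeability averaging are identical. The only cosmetic difference is that the paper reaches the dual variables via an explicit primal--dual derivation, whereas you read them off directly as pinball-loss subgradients.
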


\subsection{Conformal factuality}

As discussed in the introduction, prediction sets are not suitable for many LLM use-cases. As an alternative, \citet{quach2024conformal} and \citet{mohri2024language} use conformal inference to develop filtering methods that remove false claims from an LLM's output. We will focus specifically here on the work of \citet{mohri2024language} since this is most similar to the new methods that we will propose in this article. Recall that $\mathbf{C}_i = \{C_{ij}\}_{j=1}^{k_i}$ denotes the claims made in an LLM's response and $\mathbf{W}_i = \{W_{ij}\}_{j=1}^{k_i}$ denotes binary variables for which $W_{ij} = 1$ indicates that the claim is true. 

\citet{mohri2024language} aim to output a set of filtered claims, $\hat{F}(\mathbf{C}_i) \subseteq \mathbf{C}_i$, that contains no errors with high probability, i.e.,
\begin{equation}\label{eq:mohri_target}
\mmp\left( \exists C_{(n+1)j} \in \hat{F}(\mathbf{C}_{n+1}) \text{ such that } W_{(n+1)j} = 0  \right) \leq \alpha.
\end{equation}

To achieve this target, \citet{mohri2024language} define a scoring function $p(P_i, C_{ij}) \in \mmr$ that takes a prompt and claim as input and summarizes the LLM's internal confidence in the claim. Here, larger values of $p(P_i, C_{ij})$ indicate that the LLM believes that $C_{ij}$ is more likely to be true. Then, \citet{mohri2024language} set
\[
\hat{F}(\mathbf{C}_{i}) := F(\mathbf{C}_{i}; \hat{\tau}) = \{ C_{ij} : p(P_i, C_{ij}) \geq \hat{\tau}\},
\]
where $\hat{\tau}$ is the $\frac{\lceil (1-\alpha) (n+1) \rceil}{n+1}$-quantile of the conformity scores,
\begin{equation}\label{eq:mohri_score}
S(\mathbf{C}_i,\mathbf{W}_i) = \inf \{\tau \mid \forall C_{ij} \in \hat{F}(\mathbf{C}_i;\tau), W_{ij} = 1 \}.
\end{equation}

Mirroring the proof of split conformal prediction \cite{Papadopoulos2002}, Theorem 1 of \cite{mohri2024language} shows that if $\{({P}_i, R_i,\mathbf{C}_i, \mathbf{W}_i)\}_{i = 1}^{n+1}$ are exchangeable, this method satisfies \eqref{eq:mohri_target}.

\section{Methods} \label{sec:method}

\subsection{Generalization to alternative targets}\label{sec:methods_alternative_losses}

Our first contribution in this paper will be to generalize the conditional framework of \citet{gibbs2023conformal} to accommodate generic LLM alignment tasks. To do so, we extend the conformal risk control framework \cite{angelopoulos2024conformal, Kang2024} to provide high-probability control of a monotone loss with \emph{conditional} guarantees.

More concretely, suppose that we are given a loss function $L(\hat{F}(\mathbf{C}_i),\mathbf{W}_i)$ that measures the quality of the filtered output $\hat{F}(\mathbf{C}_i)$ relative to the ground truth $\mathbf{W}_i$. Our goal will be to ensure that $ \mmp(L(\hat{F}(\mathbf{C}_{n+1}),\mathbf{W}_{n+1}) \leq \lambda) \geq 1-\alpha$, for some user-specified tolerance $\lambda>0$. For example, in the prior section $L(\cdot,\cdot)$ was the binary indicator that $\hat{F}(\mathbf{C}_{n+1})$ contains a false claim. More generally, $L(\cdot,\cdot)$ could measure the presence of toxic or discriminatory content in the response.

To incorporate general losses into the conditional conformal framework of \citet{gibbs2023conformal}, we require two assumptions. First, we assume that the method is always permitted to abstain from issuing a response and thus $L(\emptyset,\cdot) = 0$. Second, we assume that the loss is monotone. Namely, for any sets of claims $\hat{F}_1(\mathbf{C}_i) \subseteq \hat{F}_2(\mathbf{C}_i)$, it must be the case that $L(\hat{F}_1(\mathbf{C}_i),W_i) \leq L(\hat{F}_2(\mathbf{C}_i),W_i)$. 

With these assumptions, we extend the calibration procedure of \citet{gibbs2023conformal} as follows. Let $X_i = X(P_i,R_i)$ denote a set of features computed using the prompt and response. Matching \eqref{eq:mohri_score}, we define $p(P_i,C_{ij})$ to be a score measuring the quality of claim $C_{ij}$. We define the filtered set of claims by $\hat{F}(\mathbf{C}_i) = F(\mathbf{C}_{i}; \hat{\tau}) \defeq \{ C_{ij} : p(P_i, C_{ij}) \geq \hat{\tau}\}$, and the conformity score via
\[
S(\mathbf{C}_i,\mathbf{W}_i) \defeq \inf\{\tau \mid L(F(\mathbf{C}_i ; \tau),\mathbf{W}_i) \leq \lambda \}.
\]

Our two assumptions (monotonicity of the loss and $L(\emptyset,\cdot) = 0$) imply that $S(\mathbf{C}_i,\mathbf{W}_i)$ is well-defined and equal to the minimum loss-controlling threshold. Finally, we set
\begin{equation}\label{eq:gen_loss_qr}
g_S = \text{argmin}_{g \in \mathcal{F}} \frac{1}{n+1} \sum_{i=1}^n \ell_{\alpha}(S(\mathbf{C}_i,\mathbf{W}_i) - g(X_i)) + \frac{1}{n+1} \ell_{\alpha}(S - g(X_{n+1})),
\end{equation}
and filter claims at the cutoff $\hat{\tau}(X_{n+1}) = \sup\{S \mid S \leq g_S(X_{n+1})\}$. Similar to the prediction sets of \citet{gibbs2023conformal}, $\hat{\tau}(X_{n+1})$ can be conservative, and, thus, we prefer to use a randomized analog $\hat{\tau}_{\text{rand.}}(X_{n+1})$; its formal definition can be found in \Cref{sec:app_cond_conf}. As the following theorem shows, this randomized cutoff satisfies the desired guarantee. 

\begin{theorem}\label{thm:fixed_alpha_adapt_cov_gen_loss}
    Let $\mathcal{F} = \{\Phi(X)^\top\beta : \beta \in \mmr^d\}$ denote any finite dimensional linear class.  Assume that $\{\mathbf{D}_i\}_{i=1}^{n+1}$ are exchangeable, that solutions to \eqref{eq:gen_loss_qr} and its dual are computed symmetrically in the input data, $L(\cdot,\cdot)$ is monotone in its first argument, and $L(\emptyset,\cdot) = 0$. Then, for all $f \in \mathcal{F}$,
    \[
    \mme[f(X_{n+1})(\bone\{L(\hat{F}(\mathbf{C}_{n+1};\hat{\tau}_{\text{rand.}}(X_{n+1})), \mathbf{W}_{n+1}) \leq \lambda\} - (1-\alpha))] = 0.
    \]
\end{theorem}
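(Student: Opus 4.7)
The plan is to reduce \Cref{thm:fixed_alpha_adapt_cov_gen_loss} to a direct application of \Cref{thm:fixed_alpha_adapt_cov} applied to the derived scalar scores $S_i \defeq S(\mathbf{C}_i,\mathbf{W}_i)$ together with the features $X_i$. The assumption $L(\emptyset,\cdot) = 0 \leq \lambda$, combined with the fact that $F(\mathbf{C}_i;\tau) = \emptyset$ for $\tau$ larger than every $p(P_i,C_{ij})$, guarantees that the infimum defining $S_i$ is taken over a non-empty set and is therefore finite. Since $S_i$ is a deterministic function of $\mathbf{D}_i$ alone, the sequence $\{(X_i, S_i)\}_{i=1}^{n+1}$ inherits exchangeability from $\{\mathbf{D}_i\}_{i=1}^{n+1}$. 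Moreover, \eqref{eq:gen_loss_qr} is \eqref{eq:original_qr} rewritten in these scores, so the symmetric-solution hypothesis transfers directly. \Cref{thm:fixed_alpha_adapt_cov} therefore yields, for every $f \in \mathcal{F}$,
\begin{equation*}
\mme\!\left[f(X_{n+1})\bigl(\bone\{S_{n+1} \in \hat{C}_{\mathrm{rand.}}(X_{n+1})\} - (1-\alpha)\bigr)\right] = 0,
\end{equation*}
where $\hat{C}_{\mathrm{rand.}}(X_{n+1})$ is the randomized prediction set associated with $\hat{\tau}_{\mathrm{rand.}}(X_{n+1})$.

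The remaining key step is to establish the (almost sure) event identity $\{S_{n+1} \in \hat{C}_{\mathrm{rand.}}(X_{n+1})\} = \{L(\hat{F}(\mathbf{C}_{n+1};\hat{\tau}_{\mathrm{rand.}}(X_{n+1})),\mathbf{W}_{n+1}) \leq \lambda\}$. Monotonicity of $L$ in its first argument, combined with the fact that $F(\mathbf{C}_i;\tau)$ shrinks as $\tau$ increases, implies that $\tau \mapsto L(F(\mathbf{C}_i;\tau),\mathbf{W}_i)$ is a non-increasing left-continuous step function whose jumps lie in the finite set $\{p(P_i,C_{ij})\}_j$. Consequently, the loss-control event reduces to a comparison between $\hat{\tau}_{\mathrm{rand.}}(X_{n+1})$ and $S_{n+1}$, and matches the coverage event of \Cref{thm:fixed_alpha_adapt_cov} provided the direction of inequality is handled consistently.

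The main technical obstacle is precisely this boundary: because the left-continuous step function $L \circ F$ generically drops past $\lambda$ only \emph{strictly} to the right of $S_i$, the infimum in the definition of $S_i$ need not be attained, and the loss-control event corresponds to $\{\hat{\tau}_{\mathrm{rand.}}(X_{n+1}) > S_{n+1}\}$ rather than the weak inequality appearing in $\hat{C}_{\mathrm{rand.}}$. Resolving this mismatch is exactly the role of the tie-breaking randomization used to define $\hat{\tau}_{\mathrm{rand.}}$ in \Cref{sec:app_cond_conf}: one needs to check that the randomization is applied symmetrically to both the $n$ calibration scores and the imputed test score in \eqref{eq:gen_loss_qr}, so that the probability mass concentrated on $\{S_{n+1} = \hat{\tau}(X_{n+1})\}$ is distributed in a way that aligns the two events in expectation against every $f \in \mathcal{F}$. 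With this identification in place, the conclusion of the theorem follows immediately from the displayed equality above.
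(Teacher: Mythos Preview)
Your reduction to \Cref{thm:fixed_alpha_adapt_cov} via the derived scalar scores $S_i = S(\mathbf{C}_i,\mathbf{W}_i)$ is the same route the paper takes, just packaged differently. The paper obtains \Cref{thm:fixed_alpha_adapt_cov_gen_loss} as the special case $\mathcal{P}\equiv 0$, $\alpha(\cdot)\equiv\alpha$ of a more general result (\Cref{thm:general_loss_control}); the proof of that result first establishes the event identity $\{L(\hat F(\mathbf{C}_{n+1};\hat\tau_{\text{rand.}}),\mathbf{W}_{n+1})\le\lambda\}\Leftrightarrow\{\eta^{S_{n+1}}_{n+1}\le U\}$ and then reruns the dual/KKT/exchangeability computation that underlies \Cref{thm:fixed_alpha_adapt_cov}. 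You simply black-box that computation by citing \Cref{thm:fixed_alpha_adapt_cov}, which is a legitimate economy; the substantive step in both arguments is the same event identity.

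Where your proposal has a gap is the final paragraph. The randomization defining $\hat\tau_{\text{rand.}}$ is \emph{not} symmetric tie-breaking applied to the $n+1$ scores; it is a single independent $U\sim\text{Unif}([-\alpha,1-\alpha])$ compared against the dual coordinate $\eta^{S}_{n+1}$ of the augmented quantile program, with $\hat\tau_{\text{rand.}}(X_{n+1}) = \max\{S:\eta^S_{n+1}\le U\}$. Accordingly, the paper never tries to reconcile a strict event $\{\hat\tau_{\text{rand.}}>S_{n+1}\}$ with a weak one; it works directly with $\{\eta^{S_{n+1}}_{n+1}\le U\}$---which is by definition the coverage event of $\hat C_{\text{rand.}}$---and links it to loss control via the monotonicity of $S\mapsto\eta^S_{n+1}$ (Theorem~4 of \citet{gibbs2023conformal}) together with the definition of the conformity score. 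Your description of ``distributing probability mass symmetrically'' does not match this mechanism, so as written the boundary step is not justified; the fix is to drop the threshold comparison and argue through the dual-variable characterization, exactly as the paper does.
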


As above, we can make this guarantee concrete by choosing $\mathcal{F}$ to be a linear combination of group indicators specifying the topic of the prompt, i.e., $\mathcal{F} = \{(\bone\{X \in G\})^\top \beta \mid \beta \in \R^{|\mathcal{G}|}\}$. For any finite set of groups $\mathcal{G}$, this choice of $\mathcal{F}$ yields the high-probability guarantee,
\begin{align*}
\P(L(F(\mathbf{C}_{n + 1}; \hat\tau_{n + 1}), \mathbf{W}_{n + 1}) \leq \lambda \mid X_{n + 1}  \in G) = 1 - \alpha &&\text{for all $G \in \mathcal{G}$.}
\end{align*}

Like other conformal guarantees, the result of \Cref{thm:fixed_alpha_adapt_cov_gen_loss} only holds \emph{marginally} over the calibration data, $\{\mathbf{D}_i\}_{i=1}^n$. Nevertheless, \citet{gibbs2023conformal} observe that the calibration-conditional validity of their method concentrates around the nominal level as the calibration set size grows.

\subsection{Level-adaptive conformal prediction} \label{sec:level-adaptive}

In addition to adapting the filtering cutoff to $X_{n+1}$, it may also be beneficial to adapt the desired error probability $\alpha$ at test time.\footnote{Although our methods' implementations differ, we remark here that the level-adaptive guarantee is inspired by a similar result in \citet{zhang2024posterior}.} In this manuscript, we adjust $\alpha$ to ensure sufficient claim retention for each test output. In other scenarios, we might set a stricter $\alpha$ for high-stakes prompts and a more lenient one otherwise. For now, suppose that we are given some desired level function $\alpha(\cdot)$, and our goal is to ensure that $L(\hat{F}(\mathbf{C}_{n+1}),\mathbf{W}_{n+1}) \leq \lambda$ with probability $1-\alpha(X_{n+1})$. 

Recall that in the previous section, we use the pinball loss, $\ell_{\alpha}(\cdot)$ to learn the $1-\alpha$ quantile of $S(\mathbf{C}_i,\mathbf{W}_i)$. To control the error rate adaptively, here we will use a data-dependent loss for the $i$-th point, $\ell_{\alpha(X_i)}(\cdot)$. Then, similar to the previous section, we define 
\begin{equation}\label{eq:main_gen_reg}
g_S = \argmin_{g \in \mathcal{F}} \frac{1}{n+1} \sum_{i=1}^n \ell_{{\color{blue}\alpha(X_i)}}(S(\mathbf{C}_i,\mathbf{W}_i) - g(X_i)) + \frac{1}{n+1} \ell_{{\color{blue}\alpha(X_{n+1})}}(S - g(X_{n+1})),
\end{equation}
and filter at the cutoff $\hat{\tau}_{\text{l.a.}}(X_{n+1}) := \sup\{S : S \leq g_S(X_{n+1})\}$. As in the previous section, it is more convenient to work with a slightly smaller randomized cutoff, $\hat{\tau}_{\text{l.a., rand.}}$. The following theorem shows that our previous guarantee for fixed $\alpha$ extends to this new setting.

\begin{theorem}\label{thm:adapt_alpha_gen_loss}
    Under the assumptions of Theorem \ref{thm:fixed_alpha_adapt_cov},
    \[
    \mme[f(X_{n+1})(\bone\{L(\hat{F}(\mathbf{C}_{n+1};\hat{\tau}_{\textup{l.a., rand.}}(X_{n+1})), \mathbf{W}_{n+1}) \leq \lambda\} - (1-\alpha(X_{n+1})))] = 0,\ \forall f \in \mathcal{F}.
    \]
\end{theorem}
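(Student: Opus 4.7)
The plan is to mirror the proof of \Cref{thm:fixed_alpha_adapt_cov_gen_loss}, which itself adapts the argument of \citet{gibbs2023conformal}, and to show that nothing essential changes when the scalar $\alpha$ is replaced by the data-dependent values $\alpha(X_i)$. The crucial structural property to exploit is that in \eqref{eq:main_gen_reg} the pinball parameter is tied to the same data point $X_i$ as the pinball argument, so the augmented tuples $(\mathbf{D}_i,\alpha(X_i))$ remain jointly exchangeable and the primal–dual problem is symmetric in them. The first step is the usual reduction from loss control to a threshold-crossing event: by monotonicity of $L$ and the convention $L(\emptyset,\cdot)=0$, the conformity score $S(\mathbf{C}_i,\mathbf{W}_i)$ is well-defined and
\[
\{L(\hat F(\mathbf{C}_{n+1};\hat\tau_{\text{l.a., rand.}}(X_{n+1})),\mathbf{W}_{n+1})\leq \lambda\} = \{S_{n+1} \leq \hat\tau_{\text{l.a., rand.}}(X_{n+1})\},
\]
so it suffices to prove the claimed identity with the indicator of the right-hand event.

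The second step is to write down the subgradient optimality conditions for \eqref{eq:main_gen_reg} at the imputation $S = S_{n+1}$. Using $\partial \ell_{\alpha(X_i)}(r) = \{1-\alpha(X_i)\}$ for $r>0$, $\{-\alpha(X_i)\}$ for $r<0$, and $[-\alpha(X_i),\,1-\alpha(X_i)]$ at $r=0$, a routine computation yields, for some $u_i\in[0,1]$,
\[
\sum_{i=1}^{n+1} \Phi(X_i)\bigl[\bone\{S_i > g_{S_{n+1}}(X_i)\} - \alpha(X_i) + \bone\{S_i = g_{S_{n+1}}(X_i)\}\, u_i\bigr] = 0.
\]
The randomized cutoff $\hat\tau_{\text{l.a., rand.}}$, defined in \Cref{sec:app_cond_conf} by transplanting the tie-breaking construction of \citet{gibbs2023conformal} to the heterogeneous-$\alpha$ setting, is designed so that this identity rewrites, almost surely, as
\[
\sum_{i=1}^{n+1} \Phi(X_i)\bigl[\bone\{S_i \leq \hat\tau_{\text{l.a., rand.}}(X_i)\} - (1-\alpha(X_i))\bigr] = 0.
\]
Verifying this rewriting is the main obstacle: one must check that the appendix's randomization scheme, originally devised for a single scalar $\alpha$, still produces an almost-sure identity when the dual weights take the heterogeneous values $1-\alpha(X_i)$. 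The argument is essentially unchanged because the randomization is applied coordinate-wise in $i$, so replacing the common weight by $1-\alpha(X_i)$ only relabels the per-index slack.

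The final step is to take expectations and invoke exchangeability. Because $\alpha(\cdot)$ is a deterministic function of $X$, the tuples $(\mathbf{D}_i,\alpha(X_i))$ are exchangeable, and by the symmetry hypothesis on the primal–dual solver each summand in the identity above has the same expectation. Hence each expectation is zero, which, specializing to each coordinate of $\Phi$ and using that $\mathcal{F}$ is its linear span, yields
\[
\mme\bigl[f(X_{n+1})\bigl(\bone\{S_{n+1} \leq \hat\tau_{\text{l.a., rand.}}(X_{n+1})\} - (1-\alpha(X_{n+1}))\bigr)\bigr] = 0,\ \forall f\in\mathcal{F},
\]
which combined with the first-step reduction is the statement of the theorem.
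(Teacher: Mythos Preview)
Your overall strategy---reduce to score thresholding via monotonicity, apply KKT stationarity, and finish with exchangeability---matches the paper's approach, but the central step of your argument contains a genuine gap.

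The claimed almost-sure identity
\[
\sum_{i=1}^{n+1} \Phi(X_i)\bigl[\bone\{S_i \leq \hat\tau_{\text{l.a., rand.}}(X_i)\} - (1-\alpha(X_i))\bigr] = 0
\]
is not what the randomization in \Cref{sec:app_cond_conf} produces, and your justification that ``the randomization is applied coordinate-wise in $i$'' is incorrect. In the paper's construction (as in \citet{gibbs2023conformal}), a \emph{single} uniform variable $U\sim\text{Unif}([-\alpha(X_{n+1}),1-\alpha(X_{n+1})])$ is drawn, and only the test-point threshold $\hat\tau_{\text{l.a., rand.}}(X_{n+1})=\max\{S:\eta^S_{n+1}\le U\}$ is randomized. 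No thresholds $\hat\tau_{\text{l.a., rand.}}(X_i)$ for $i\le n$ are defined, and even if one introduced independent per-coordinate randomizations, replacing the specific subgradient values $u_i$ in the KKT identity by random indicators would in general destroy the equality: the $d$ stationarity equations constrain the $u_i$'s jointly, not marginally.

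The paper's actual route avoids this difficulty by working directly with the dual variables. KKT stationarity (with $\mathcal{P}\equiv 0$) gives the almost-sure identity $\sum_{i=1}^{n+1}\Phi(X_i)\,\eta_i^{S_{n+1}}=0$, where each $\eta_i^{S_{n+1}}\in[-\alpha(X_i),\,1-\alpha(X_i)]$. Exchangeability is applied to these dual variables, yielding $\mme[\Phi(X_{n+1})\,\eta_{n+1}^{S_{n+1}}]=0$. Only then is the single randomization at index $n{+}1$ invoked: integrating out $U$ conditionally on the data gives
\[
\mme_U\!\bigl[\bone\{\eta_{n+1}^{S_{n+1}}\le U\}-(1-\alpha(X_{n+1}))\bigr] \;=\; -\,\eta_{n+1}^{S_{n+1}},
\]
and combining with the monotonicity of $S\mapsto\eta^S_{n+1}$ completes the proof. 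In short, the exchangeability step should act on $\eta_i$, not on indicator events, and the randomization enters only \emph{after} that step, at the single test index.
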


Note that we recover the guarantee presented in \eqref{eq:level_adaptive_preview} by defining 
\begin{align*}
    \Phi(\cdot) = \left \{\bone \{\alpha(\cdot) \in I\} \cdot \bone \{\cdot \in G\} : I \in \mathcal{I}, G \in \mathcal{G} \right \} \quad \text{ and } \quad \mathcal{F} = \{\Phi(\cdot)^\top \beta : \beta \in \R^d\}.
\end{align*}

While this choice of function class elicits an interpretable guarantee, it can be quite large in practice. This can slow down the computation of the conformal cutoff and reduce claim retention. Alternatively, this class might be chosen to exactly satisfy other popular (and more efficient) approximations to multicalibration such as low-degree multicalibration \cite{gopalan2022low}. In \Cref{sec:app_func_class}, we briefly explore the consequences of choosing an insufficiently complex function class in a synthetic regression example. A more substantial exposition of these trade-offs can be found in \citet{gibbs2023conformal}. 

\paragraph{Estimating $\alpha(\cdot)$} While the above theory treats $\alpha(\cdot)$ as fixed, in order to ensure that the outputs of our method meet our desired quality criterion (e.g., small interval length or sufficient claim retention) we will need to learn $\alpha(\cdot)$ from the data. To do this, we split our training data into two folds: one is used to estimate $\alpha(\cdot)$ and the other is used to run the calibration method described above. After making these splits, $\alpha(\cdot)$ can be learned using any regression method. In our experiments, we will aim to learn the smallest possible values of $\alpha(\cdot)$ that meet our target quality criterion. A detailed description of our procedure for doing so is given in \Cref{sec:app_est_alpha}.

\subsection{Conditional boosting} \label{sec:boosting}
In this section, we describe a new method for automated conformity score design subject to \emph{conditional coverage} constraints. As discussed in the introduction, the goal of this procedure is to find conformity scores that when combined with our filtering method, allow the filter to retain as much of the LLM output as possible while still ensuring validity. For the sake of simplicity, we will assume  that we are boosting the conformity score defined in \eqref{eq:mohri_score}. Our approach, however, will generalize to any parameterized score function. We note here that the concurrent work of \citet{kiyani2024length} presents another approach to this problem by reframing it as a min-max optimization task.

Let $p_\theta(\cdot)$ denote a \emph{parameterized} claim-scoring function that assigns a measure of confidence to each sub-claim. We run the conditional conformal method on a set of size $n$ and optimize $\theta$ such that the number of retained claims is maximized on a hold-out set of size $m$, i.e.,
\begin{equation}
\theta^* = \argmax_{\theta} \sum_{i = 1}^m \sum_{j = 1}^{k_{n+i}} \indic{p_\theta(P_{n+i},C_{(n+i)j}) \geq  \hat{\tau}_i} .\label{eq:boosting_obj}
\end{equation}
Here, $\hat{\tau}_i$ denotes the filtering threshold output by the conditional conformal method on the held-out test point $X_{n+i}$. Crucially, $\hat\tau_i = \hat{\tau}_i(\theta)$ carries a hidden dependence on $\theta$: $\hat\tau_i$ is obtained by solving a quantile regression problem on scores that depend on $p_\theta(\cdot)$.

There are therefore two obstacles to optimizing \eqref{eq:boosting_obj} via gradient descent. First, the indicator function is non-differentiable. To resolve this, we proceed as in \citet{stutz2021learning} and approximate the indicator using a sigmoid function. Second, it is unclear how to backpropagate through  $\hat{\tau}_i(\theta)$. 

Observe that for a linear function class $\mathcal{F} =\{\Phi(X)^\top \beta : \beta \in \mmr^d\}$, the regression \eqref{eq:main_gen_reg} by which we obtain $\hat\tau_i(\theta)$ is a linear program. Using this observation, we find that $\hat\tau_i(\theta)$ can be expressed as the solution to a linear system of equations given by a subset of the dataset that we denote by $B$, i.e., 
\begin{equation}
\hat{\tau}_i(\theta) = \Phi(X_{n + i})^\top \left(\Phi(X)_B^{-1} S_B(\theta) \right). \label{eq:basis_sol}
\end{equation}
Here, $\Phi(X)_B^{-1}$ and $S_B(\theta)$ are used to denote the matrix of features and vector of scores for the subset of the data given by $B$. In the linear programming literature, this subset is typically referred to as the ``optimal basis.'' It can be explicitly identified by selecting the points at which the quantile regression interpolates the scores. Then, so long as $B$ is invariant to small perturbations of $\theta$ and $S$ is differentiable in $\theta$, we can obtain the derivative of $\hat{\tau}_i(\theta)$ with respect to $\theta$ by backpropagating through \eqref{eq:basis_sol}. \Cref{prop:valid_grad} identifies a simple condition under which this is possible.

\begin{proposition} \label{prop:valid_grad}
    Let $\hat\tau_i(\theta)$ denote the non-randomized filtering threshold defined via \eqref{eq:main_gen_reg}. Assume that the threshold is finite and that the augmented quantile regression for all $S > \hat\tau_{i}(\theta)$ admits a unique non-degenerate basic solution. Let $B$ denote this basis. Then, $\hat\tau_i(\theta)$ has derivative 
    \begin{equation}
        \partial_{\theta} \hat{\tau}_i(\theta) = \Phi(X_{n + i})^\top \left(\Phi(X)_B^{-1} \partial_{\theta} S_B(\theta) \right). \label{eq:tau_jac}
    \end{equation}
\end{proposition}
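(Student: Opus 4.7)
The plan is to exploit the linear programming structure of the augmented quantile regression \eqref{eq:main_gen_reg}. With $\mathcal{F} = \{\Phi(X)^\top \beta : \beta \in \mmr^d\}$, introducing nonnegative positive and negative residual variables recasts \eqref{eq:main_gen_reg} as a standard linear program whose right-hand side is the vector of scores $\{S_i(\theta)\}_{i=1}^n$ augmented by the imputed test score $S$. At optimality, the primal solution is characterized by a basis $B$ of $d$ indices at which the fitted regression interpolates the scores, so that $\beta^\star = \Phi(X)_B^{-1} S_B$ and $g_S(X_{n+i}) = \Phi(X_{n+i})^\top \Phi(X)_B^{-1} S_B$. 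Non-degeneracy guarantees that $\Phi(X)_B$ has full rank, so this expression is well-defined.

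The first step is to identify the basis relevant to $\hat\tau_i(\theta)$. I would argue that the test index $n+i$ cannot belong to $B$ for any $S > \hat\tau_i(\theta)$: otherwise, interpolation would force $g_S(X_{n+i}) = S$, contradicting the finiteness of $\hat\tau_i(\theta) = \sup\{S : S \leq g_S(X_{n+i})\}$. Consequently, the right-hand side entries $S_B(\theta)$ do not depend on $S$, so the map $S \mapsto g_S(X_{n+i})$ has slope zero on every constant-basis piece of $(\hat\tau_i(\theta), \infty)$; continuity of the LP value function in its right-hand side then upgrades this to a single constant value on the entire ray. Taking $S \downarrow \hat\tau_i(\theta)$ and using the definition of the threshold yields the identity
\[
\hat\tau_i(\theta) = \Phi(X_{n+i})^\top \Phi(X)_B^{-1} S_B(\theta).
\]

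The main obstacle is showing that this identity persists under small perturbations of $\theta$, since the basis $B$ is in principle an implicit function of $\theta$. Here I would invoke classical linear programming sensitivity analysis: $\theta$ enters the LP only through the right-hand side, and the hypothesis that the basic solution is unique and non-degenerate is exactly the primal/dual non-degeneracy condition that guarantees local invariance of the optimal basis under sufficiently small right-hand side perturbations. Combined with continuity of $\hat\tau_i(\theta)$ in $\theta$---so that any fixed $S^\star > \hat\tau_i(\theta_0)$ remains above $\hat\tau_i(\theta)$ on a neighborhood of $\theta_0$---this shows that the same basis $B$ remains active on an open set around $\theta_0$. The displayed identity therefore holds on that neighborhood, and differentiating in $\theta$ with $\Phi(X_{n+i})$ and $\Phi(X)_B^{-1}$ treated as fixed immediately produces \eqref{eq:tau_jac}.
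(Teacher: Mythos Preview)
Your proposal is correct and follows the paper's route: both recast the augmented quantile regression as a linear program, invoke LP sensitivity analysis to show the optimal basis $B$ is locally invariant under perturbations of $\theta$, use constancy of $S \mapsto \beta_S$ for $S > \hat\tau_i(\theta)$ to identify $\hat\tau_i(\theta)$ with $\Phi(X_{n+i})^\top \Phi(X)_B^{-1} S_B(\theta)$, and then differentiate. Your explicit argument that the test index $n+i$ cannot belong to $B$ is a clean addition that the paper leaves implicit in its appeal to the KKT conditions; and your description of $\theta$ as entering through the right-hand side is the correct reading of the primal LP (the paper phrases this as ``objective coefficients'', which is really the dual picture).

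One point where the paper is more careful: in your last paragraph you invoke ``continuity of $\hat\tau_i(\theta)$ in $\theta$'' to guarantee that a fixed $S^\star > \hat\tau_i(\theta_0)$ remains above $\hat\tau_i(\theta)$ on a neighborhood, but you never establish this continuity independently, and it is part of what differentiability would deliver. The paper avoids any such appeal: it fixes $S_0 = \hat\tau_i(\theta_0)+1$, uses basis stability only to show that $\theta \mapsto \Phi(X_{n+i})^\top \beta_{S_0}(\theta)$ is locally linear near $\theta_0$, and then gives a short direct argument (relying solely on the already-established constancy of $S \mapsto \beta_S(\theta)$ above $\hat\tau_i(\theta)$ at the perturbed $\theta$) to conclude that $\hat\tau_i(\theta)$ coincides with this value on a full neighborhood, without presupposing continuity of $\hat\tau_i$. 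Your sketch can be completed in the same way.
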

The technical condition on the basis in \Cref{prop:valid_grad} nearly always holds; if it does not, we remark that uniqueness can always be obtained by ``dithering'' the scores and/or design matrix, i.e., adding small i.i.d. noise, at each step of the boosting procedure \cite{charnes1952optimality}.

The rest of our procedure emulates the  \textbf{ConfTr} algorithm of \citet{stutz2021learning}. Having set aside a portion of the dataset for the final calibration procedure,  we replicate the conformal prediction algorithm at each iteration. Namely, we randomly split the remaining data into a ``calibration'' and ``test'' set. We then run the level-adaptive conformal prediction method on the calibration set, compute the quality criterion on the test set, and backpropagate on the objective given by \eqref{eq:boosting_obj}. Algorithm~\ref{alg:boosting_alg}, which presents a complete description of the procedure, can be found in \Cref{sec:app_boost}.

\section{Experiments} \label{sec:experiment}

\subsection{Synthetic data} \label{sec:app_experiment_syn}

\begin{figure}[h]
    \centering
    \includegraphics[width=0.7\textwidth]{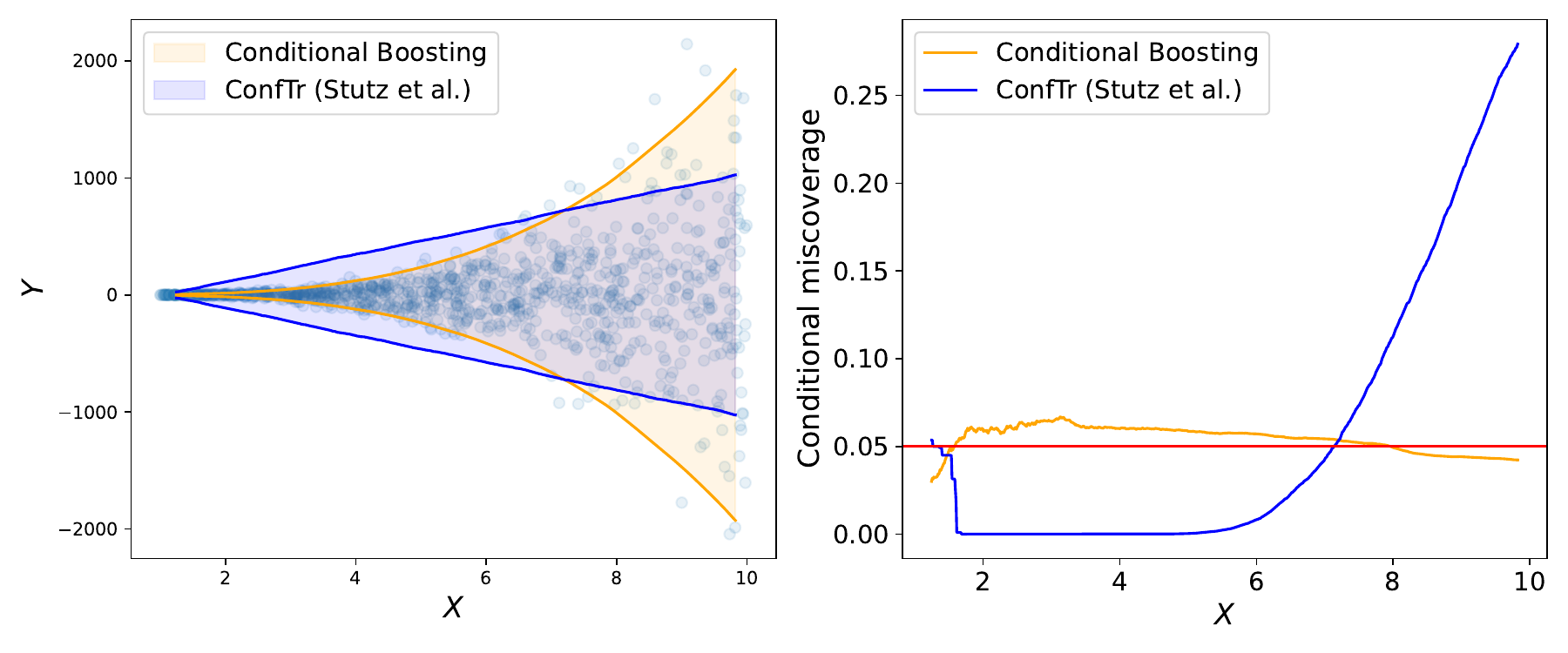}
    \caption{Comparison of the marginal boosting procedure of \citet{stutz2021learning} (blue) against our conditional boosting method (orange). The left panel shows the prediction sets produced by each method, while the right panel displays the conditional coverage, $\mmp(Y_{n+1} \in \hat{C}(X_{n+1}) \mid X^{(1)}_{n+1})$ against the values of the first feature, $X_{n+1}^{(1)}$. Using the Adam optimizer with learning rate set to $0.001$, the plotted scores are boosted for $500$ steps on a synthetic dataset of size $n = 1000$ and evaluated on another test dataset of size $n = 2000$.}
    \label{fig:synthetic_boosting_comp}
\end{figure}

To demonstrate how our methods improve upon previous approaches, we consider a synthetic regression dataset with substantial heteroskedasticity. The data in this experiment are generated by sampling two features $X^{(1)}_i \simiid \text{Unif}(1, 10)$ and $X^{(2)}_{i} \simiid \text{Unif}(5, 10)$, while the labels are sampled from a distribution whose variance is given by the sixth power of the first feature, i.e., $Y_i \sim \mathcal{N}(0, (X_i^{(1)})^6)$. We then construct prediction intervals using the score function $S_\theta(X, Y) := |Y|/|X^\top \theta|$.

\Cref{fig:synthetic_boosting_comp} compares the \textbf{ConfTr} algorithm of \citet{stutz2021learning} against our \textbf{Conditional Boosting} method. While the left panel shows that the former method may sometimes issue shorter intervals, these come at the cost of degraded conditional validity (right panel). In particular, the prediction sets resulting from \textbf{ConfTr} have extremely poor coverage on the subset of data with high conditional variance. By contrast, learning $\theta$ using our procedure still attains near-perfect conditional coverage. 

\begin{figure}[h]
    \centering
    \includegraphics[width=\textwidth]{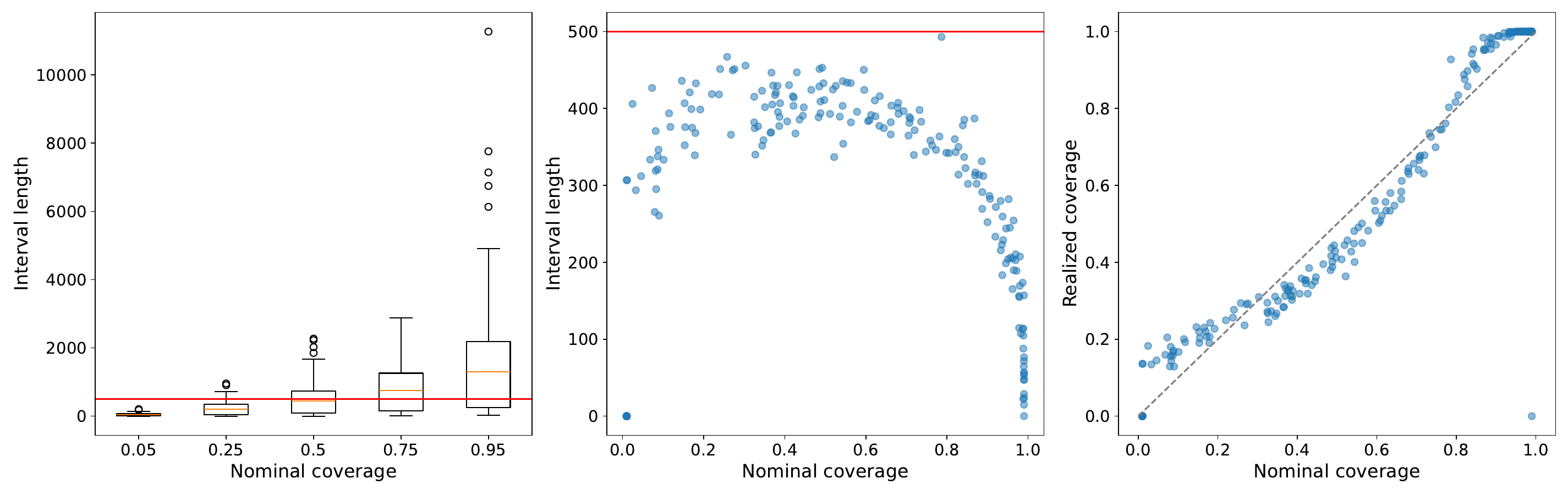}
    \caption{Performance of our level-adaptive method on a synthetic dataset. We use $n = 2000$ points to estimate the adaptive $\alpha(\cdot)$ function. We then run the level-adaptive method on a calibration set of size $n = 1000$ and evaluate the method on $1$ test point; the plotted points (center, right) are obtained from $200$ trials. The left panel shows the distribution of interval lengths obtained on the test set for fixed values of $\alpha \in \{0.05,0.25,0.5,0.75,0.95\}$. The center panel displays the interval lengths obtained when the level $\alpha(X_{n+1})$ is now chosen adaptively to ensure a maximum prediction set size of at most 500 (red line). Finally, the right panel compares the realized coverage $\mmp(Y_{n+1} \in \hat{C}(X_{n+1}) \mid \alpha(X_{n+1}))$ against the nominal level, $1-\alpha(X_{n+1})$ reported by our method with $\mathcal{F} = \{\beta_0 + \sum_{i = 1}^2 \beta_1 x^i + \beta_3 \alpha(x) \mid \beta \in \R^4 \}$. }
    \label{fig:length_control_synthetic}
\end{figure}

\Cref{fig:length_control_synthetic} applies the level-adaptive conformal prediction procedure to the same data-generating process. In this example, we use the naive absolute value score $S(X, Y) = |Y|$. Since the score is poorly chosen and the data is noisy, the resulting prediction interval can be very large. The left panel of the figure shows boxplots that display the empirical distribution of interval lengths over the test set at various fixed levels of $\alpha \in \{0.05,0.25,0.5,0.75,0.95\}$. Unsurprisingly, at most fixed levels for $\alpha$, many intervals are substantially larger than the maximum target length of 500. This is corrected when we fit using an adaptive level, $\alpha(X_i)$ (center panel), which is estimated to ensure that the interval lengths fall below $500$. Finally, the right-hand panel verifies that the guarantee of \Cref{thm:adapt_alpha_gen_loss} is accurate. The issued values of $\alpha(X_{n+1})$ closely track the true coverage probabilities; note that the latter can be computed in closed-form for our data-generating process.
\subsection{Medical long-form question-answering} \label{sec:lfmqa}
In this section, we summarize the experimental setup for the claims and figures presented in \Cref{sec:preview}. Our experiment considers the long-form medical question-answering dataset (\textbf{MedLFQA}) published in \citet{jeong2024olaph}. It combines several previously established benchmarks in the medical question-answering literature: \textbf{HealthSearchQA} ($n = 3047$), \textbf{K-QA} ($n = 1077$), \textbf{LiveQA} ($n = 100$) and \textbf{MedicationQA} ($n = 627$). Each prompt in these datasets is also accompanied by either an LLM or human-generated response to each question \cite{BenAbacha:MEDINFO19, abacha2017overview, manes2024k, singhal2023large}. Following prior work \cite{manes2024k, jeong2024olaph}, we treat these responses as ground-truth for model evaluation. Also matching \citet{jeong2024olaph}, in our plots, we distinguish between the subset of questions ($n = 201$) in the K-QA dataset that have gold-standard physician responses (denoted by K-QA Golden) and those questions with LLM-generated responses (denoted by K-QA Silver). To reproduce our results, our GitHub repository includes a filtered and combined dataset that removes some non-health-related prompts contaminating the \textbf{HealthSearchQA} dataset.

To obtain the dataset used in our experiment, we query GPT-3.5-Turbo with each of the prompts in the combined dataset, and then ask GPT-4o to parse these prompts into self-contained sub-claims. Then, to obtain our ``ground-truth'' labels, we ask GPT-3.5-Turbo to evaluate whether each claim is substantiated by the pseudo-ground-truth response for that prompt. The prompts we use for each of these tasks are included in \Cref{sec:app_prompts}.

In all of the medical question-answering experiments displayed, we run the conditional conformal method with the following scoring function, 
\[
S(\mathbf{C}_i, \mathbf{W}_i) = \inf \{\tau : \hat{F}(\mathbf{C}_i; \tau) \text{ contains no unsubstantiated claims} \}.
\]

To run our procedure, we must first split the dataset into two parts. We use the first split to both run the conditional boosting method and estimate $\alpha(\cdot)$, while we use the second split to run level-adaptive conformal prediction and evaluate our method's guarantees. 

On the first split ($n = 1421$), we boost our claim score and subsequently estimate the level required for $70\%$ claim retention. We achieve the former by optimizing a linear combination of four scores previously described in the LLM factuality literature: the frequency, self-evaluation, and ordinal scores described in \citet{mohri2024language} and the token-level probability assigned to a single-token self-evaluation \cite{kadavath2022language}. \Cref{sec:app_claim_score} describes how these claim scores are computed. The function class $\mathcal{F}$ for which we run the conditional boosting method is defined by the linear combination of several prompt-response features. For the sake of brevity, we defer a detailed description of this function class, our method for estimating $\alpha(\cdot)$, and ablations of our method to \Cref{sec:app_experiment_medqa}.

On the second half of this dataset, we run level-adaptive conformal prediction over the $\mathcal{F}$ used in the conditional boosting step augmented by indicator functions that correspond to $\alpha(\cdot)$ falling in some sub-interval with endpoints lying in $\{0,0.05,\dots,1\}$. Finally, to produce the plots in \Cref{fig:alpha_calibration}, we run this step over $100$ random calibration-test  splits of size $2354$ and $500$, respectively.

\subsection{Wikipedia biographies}\label{sec:wiki}
\begin{figure}[ht]
    \centering
    \includegraphics[width=\textwidth]{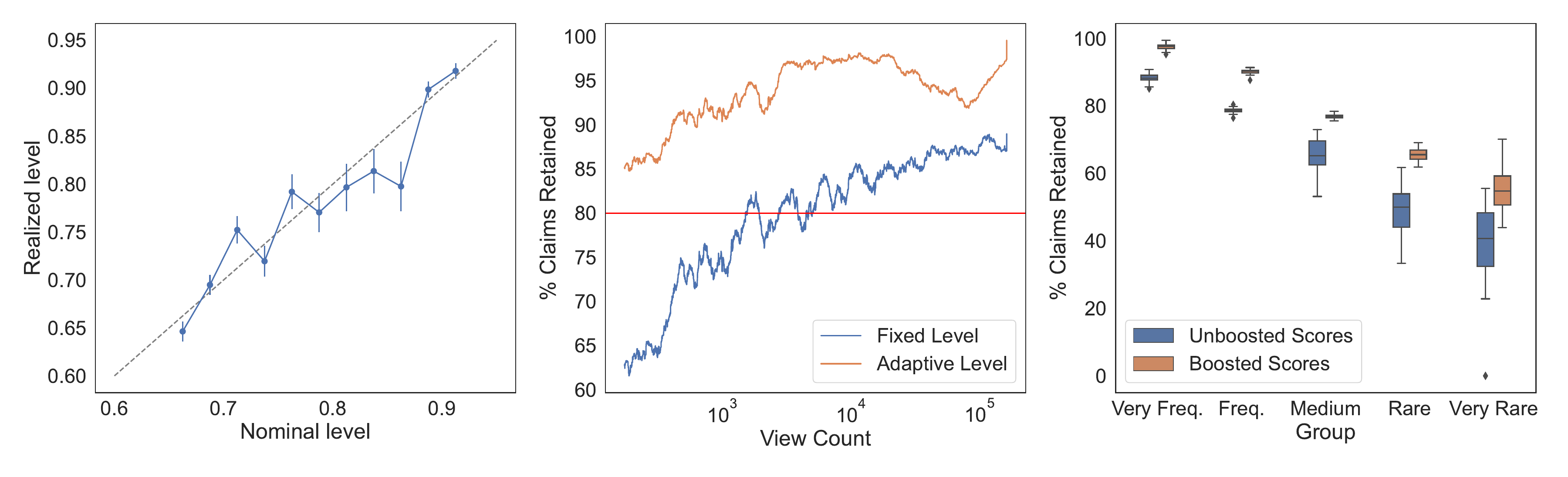}
    \caption{Empirical demonstration of our method on the Wikipedia biographies dataset. The left two panels display results for our level-adaptive method, which aims to issue biographies with 3 or fewer errors, while retaining at least 80\% of the original claims in the prompt. The left panel compares the binned nominal probabilities reported by our method with bin width $2.5\%$ against the true realized empirical values evaluated over $381$ test points and 100 calibration-test splits. The center panel compares the number of claims retained by our method (orange) against the fixed level method of \citet{mohri2024language} (blue). In this plot, the $y$-axis displays a moving average of the number of claims retained with window size $1000$, while the $x$-axis a moving average of the number of views (in Jan.~2023) of the Wikipedia article associated with each prompt on the log-scale. Finally, the right-hand panel displays the claim retention obtained with boosted (orange) and unboosted (blue) scores at a fixed level of $\alpha = 0.1$. Boxplots in this panel show the distribution of retentions for 100 calibration-test splits each containing 7246 calibration points and 381 test points.}
    \label{fig:alpha_calibration_wiki}
\end{figure}

\begin{figure}[ht]
    \centering
    \includegraphics[width=\textwidth]{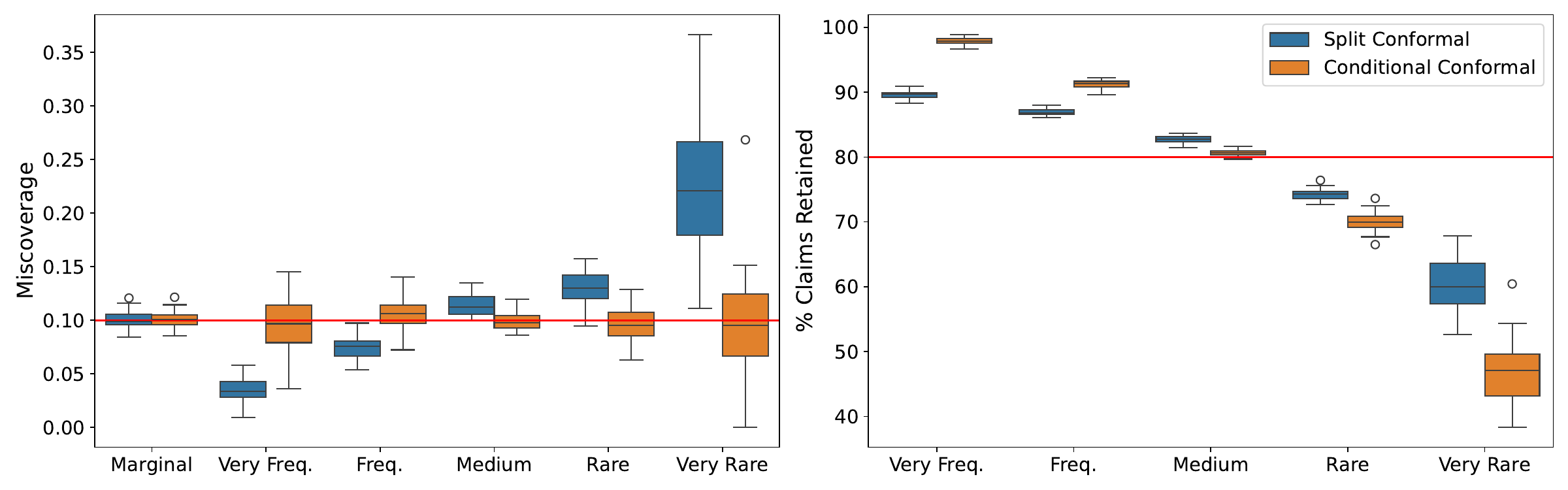} %
    \caption{Comparison of the split conformal calibration method of \citet{mohri2024language} (blue) against our conditional conformal method (orange). The left and right panels displays the miscoverage and percentage of claims retained by the two methods against the number of views received by the Wikipedia pages in January 2023. The displayed boxplots are computed over $200$ trials in which we run both methods on a calibration set of $5890$ points and evaluate their coverage on a test set of size $2500$. The displayed groups correspond to view counts that are binned into the intervals $(-\infty,\infty)$, $[1000000,\infty)$, $[100000,1000000)$, $[1000,100000)$, $[100,1000)$, $[0,100)$. The fraction of the data set belonging to each group (in plotted order) is $8.7\%, 30.9\%, 39.5\%, 19.3\%,$ and $1.5\%$, respectively.}
    \label{fig:factscore-claim=cov-comparison}
\end{figure}

Our second experiment reconsiders the biography dataset analyzed in \citet{mohri2024language}. We sample $8516$ names from Wikipedia and query GPT-3.5-Turbo with the prompt ``Write me a short biography of [NAME].'' We then ask GPT-4o to parse the prompts into self-contained sub-claims. While the prior work only annotates $25$ biographies, we need to scale up this experiment to validate our proposed methods. To get around expensive human annotation, we use a variant of the FActscore procedure developed by \citet{min2023factscore}. Their method includes relevant Wikipedia passages identified by the BM25 ranking function in the prompt to the LLM and asks if the claims are supported. Scoring of the LLM outputs (i.e.,~computation of $p(P_{i},C_{ij})$) is done using the frequency scoring method in all displayed figures except for the boosting comparison (the right panel of \Cref{fig:alpha_calibration_wiki}). In that plot, we compare the claim retention achieved by an optimal ensemble of three cheaper-to-compute scores (self-evaluation, ordinal, log-probability) to the retention of a uniform mixture of those scores \cite{mohri2024language, kadavath2022language}. A detailed description of the claim scoring and data collection methods mentioned in this paragraph can be found in \Cref{sec:app_claim_score,sec:app_prompts}. Method ablations and experiments comparing the efficacy of each claim scoring approach are included in \Cref{sec:app_experiment_wiki}.

In all of the experiments displayed here, we run the conditional conformal method with the scoring function given by 
\[
S(\mathbf{C}_i, \mathbf{W}_i) = \inf \{\tau : \hat{F}(\mathbf{C}_i; \tau) \text{ contains at most 3 false claims} \}.
\]
Empirically, we find that the claim scoring methods we apply are not as well-correlated with factuality as they were for {\bf{MedLFQA}}. As a result, in order to issue non-trivial guarantees, i.e., to ensure both that $1-\alpha(X_{n+1})$ remains reasonably large and that the method does not filter too much of the response, we allow the filter to keep up to three false claims.\footnote{While a non-zero error guarantee is not always desirable, it may be suitable for tasks where the cost of an error is low and/or high error rates appear unavoidable, e.g., ``Please suggest ten recommendations for my vacation.'' or ``Give me a list of already-approved drugs that might be repurposed to treat COVID.''} Experimental results for the more typical $0$-error guarantee are also included in \Cref{sec:app_experiment_wiki}. For the sake of avoiding redundancy with the previous subsection, we defer the remaining experimental details to \Cref{sec:app_experiment_wiki}. 

Experimental results displaying the effectiveness of our level-adaptive and conditional boosting procedures on this dataset can be found in \Cref{fig:alpha_calibration_wiki}. In \Cref{fig:factscore-claim=cov-comparison}, we also provide additional comparisons contrasting the conditional conformal method of \Cref{sec:methods_alternative_losses} with the marginal method proposed in \citet{mohri2024language}. As anticipated by our theory, we find that our method provides accurate coverage regardless of the popularity of the Wikipedia page, while the split conformal method of \citet{mohri2024language} gives variable results (left panel). As the right panel of the figure shows, this conditional accuracy is obtained by retaining more claims for frequently viewed topics and less claims for rare topics on which the LLM is more uncertain.

\section{Limitations} \label{sec:disc}

While we believe the described methodology can improve LLM factuality, we highlight some limitations here. First, our theoretical results assume that the prompt-response tuples are i.i.d. In settings where user interactions change over time, the test prompts may be incomparable to previous prompts. Even though our framework can be applied to guarantee validity under a pre-specified class of covariate shifts (see \Cref{sec:app_cov_shift} for details), robustness guarantees under other types of distribution shift will require additional research. Second, since our filter is a wrapper around existing claim scores and thus the utility of our method depends on the quality of the underlying scoring algorithm. If the claim scores are weakly correlated with factuality, the filtered output will be accompanied by a vacuous nominal guarantee. Nevertheless, discovering new features that accurately predict LLM hallucinations is an active field of research, and our wrapper can easily leverage new approaches.

\section*{Acknowledgments}
E.J.C. was supported by the Office of Naval Research grant N00014-24-1-2305, the National Science Foundation grant DMS-2032014, and the Simons Foundation under award 814641. I.G. was also supported by the Office of Naval Research grant N00014-24-1-2305 and the Simons Foundation award 814641, as well as additionally by the Overdeck Fellowship Fund. J.J.C. was supported by the John and Fannie Hertz Foundation. The authors are grateful to Anav Sood and Tim Morrison for helpful discussion on this work.

\bibliographystyle{abbrvnat}
\bibliography{neurips_2024}

\newpage

\appendix

\section{Additional details about conditional conformal}\label{sec:app_cond_conf}

In addition to the methods discussed in Section \ref{sec:cc_background}, \citet{gibbs2023conformal} also consider extensions of \Cref{thm:fixed_alpha_adapt_cov} to other function classes and regularized quantile regression procedures. In this section, we show how all of these results can be applied to our present setting. We then describe the randomization procedure that is used to construct our filtering threshold, $\hat{\tau}_{\text{l.a. rand.}}(X_{n+1})$ and prove Theorems \ref{thm:fixed_alpha_adapt_cov_gen_loss} and \ref{thm:adapt_alpha_gen_loss}.

Following \citet{gibbs2023conformal}, let $\mathcal{F}$ be a vector space and $\mathcal{P} : \mathcal{F} \to \mmr$ be a convex penalty. Then, to extend our procedure to such function classes and regularizers, we consider regressions of the form
\begin{equation}\label{eq:primal}
g_S = \underset{g \in \mathcal{F}}{\text{argmin}} \frac{1}{n+1} \sum_{i=1}^{n} \ell_{\alpha(X_i)}(S(\mathbf{C}_i,\mathbf{W}_i) - g(X_i)) + \frac{1}{n+1} \ell_{\alpha(X_{n+1})}(S - g(X_{n+1})) + \mathcal{P}(g).
\end{equation}
Following our work in the main text, this leads to the filtering threshold $\hat{\tau}_{\text{l.a.}}(X_{n+1}) = \max\{S : S \leq g_S(X_{n+1})\}$. As discussed earlier, this definition of $\hat{\tau}_{\text{l.a.}}(X_{n+1})$ can be slightly conservative and thus it can be preferrable to work with a randomized analog. We derive this analog now. First, note that \eqref{eq:primal} is equivalent to the program
\[
    \begin{split}
          \underset{p,q \in \mmr^{n+1},\ g\in \mathcal{F}}{\text{minimize}} \quad &   \sum_{i = 1}^{n+1} (1-\alpha(X_i))p_i + \alpha(X_i) q_i +  (n+1)\mathcal{P}(g),\\
          \text{subject to} \quad  &  S(\mathbf{C}_i,\mathbf{W}_i) - g(X_i) - p_i + q_i = 0,\ \forall 1 \leq i \leq n,\\
        &  S - g(X_{n+1}) - p_{n+1} + q_{n+1} = 0,\\
        &  p_i,q_i \geq 0,\ \forall 1 \leq i \leq n+1. 
    \end{split}
\]
Our randomized variant of $\hat{\tau}(X_{n+1})$ will be based on the dual of this program.  Following the calculations of \citet{gibbs2023conformal}, the Lagrangian for this problem is 
\begin{align*}
 &  \sum_{i=1}^{n+1} (1-\alpha(X_i))p_i + \alpha(X_i)q_i + (n+1)\mathcal{P}(g) + \sum_{i=1}^{n} \eta_{i}(S(\mathbf{C}_i,\mathbf{W}_i) - g(X_i) - p_i + q_i)  \\
 & \ \ \ \ + \eta_{n+1} (S - g(X_{n+1}) - p_{n+1} + q_{n+1}) - \sum_{i=1}^{n+1} (\lambda_i p_i + \xi_i q_i),
\end{align*}
where $\lambda_i$ and $\xi_i$ are constrained to be non-negative. Letting $\mathcal{P}^*(\eta) = -\min_{g \in \mathcal{F}} ((n+1)\mathcal{P}(g) - \sum_{i=1}^{n+1} \eta_i g(X_i))$ and minimizing the Lagrangian over $g$ gives
\[
\sum_{i=1}^{n+1} (1-\alpha(X_i))p_i + \alpha(X_i)q_i  + \sum_{i=1}^{n} \eta_i S_i(\mathbf{C}_i,\mathbf{W}_i) + \eta_{n+1} S - \mathcal{P}^*(\eta) + \sum_{i=1}^{n+1} \eta_i(q_i - p_i) - \sum_{i=1}^{n+1} (\lambda_i p_i + \xi_i q_i),
\]
and additionally minimizing over $p_i$ and $q_i$ produces the constraints
\[
\lambda_i =   (1-\alpha(X_i)) - \eta_i,\ \xi_i = \alpha(X_i) + \eta_i. 
\]
Finally, since $\lambda_i, \xi_i \geq 0$, these constraints are equivalent to the inequalities $-\alpha(X_i) \leq \eta_i \leq (1-\alpha(X_i)) $. Putting this all together, we arrive at the dual formulation
\[
\begin{split}
          \underset{\eta \in \mmr^{n+1}}{\text{maximize}} \quad &   \sum_{i=1}^{n} \eta_iS(\mathbf{C}_i,\mathbf{W}_i) + \eta_{n+1} S - \mathcal{P}^*(\eta),\\
          \text{subject to} \quad &  -\alpha(X_i) \leq \eta_i \leq (1-\alpha(X_i)),\ \forall 1 \leq i \leq n+1.
    \end{split}
\]
Let $\eta^S$ denote a vector of solutions to this program. Then, the critical observation of \citet{gibbs2023conformal} is that the above calculations give a close relationship between the event $S \leq g_S(X_{n+1})$ and the value of $\eta^S_{n+1}$. In particular, assuming that strong duality holds, complementary slackness tells us that $S> g_S(X_{n+1}) \implies \eta^S_{n+1} = 1-\alpha(X_{n+1})$,  $S < g_S(X_{n+1}) \implies \eta^S_{n+1} = -\alpha(X_{n+1})$, and $\eta^S_{n+1} \in (-\alpha(X_{n+1}), 1-\alpha(X_{n+1})) \implies S = g_S(X_{n+1})$. The randomization scheme derived in \citet{gibbs2023conformal} for fixed alpha then corresponds to randomizing over the event that $S = g_S(X_{n+1})$. In particular, following that work, here we define the randomized threshold
\[
\hat{\tau}_{\text{l.a. rand.}}(X_{n+1}) = \max\{S : \eta^S_{n+1} \leq U\},
\]
where $U \sim \text{Unif}([-\alpha(X_i), 1-\alpha(X_i)])$ is a random variable drawn independent of the data. The following theorem states the coverage guarantee of this method. 

\begin{theorem}\label{thm:general_loss_control}
    Assume that $\mathcal{F}$ is a vector space and that for all $f,g \in \mathcal{F}$, the derivative of $\epsilon \mapsto \mathcal{P}(g + \epsilon f)$ exists. Assume additionally that strong duality holds between the primal and dual programs derived above and that $\eta^{S(\mathbf{C}_{n+1},\mathbf{W}_{n+1})}$ is computed using an algorithm that is symmetric in the input data. Finally, assume that $\{(P_i,R_i,X_i,\mathbf{C}_i,\mathbf{W}_i)\}_{i=1}^{n+1}$ are exchangeable and that $L(\cdot,\cdot)$ is monotone and such that $L(\emptyset,\cdot) = 0$. Then, for all $f \in \mathcal{F}$,
    \begin{align*}
    & \mme[f(X_{n+1})(\bone\{L(\hat{F}(\mathbf{C}_{n+1};\hat{\tau}_{\textup{l.a. rand.}}(X_{n+1})), \mathbf{W}_{n+1}) \leq \lambda\} - (1-\alpha(X_{n+1})))]\\
    & \hspace{7.5cm} = -\mme\left[ \frac{d}{d\epsilon} \mathcal{P}(g_{S(\mathbf{C}_{n+1},\mathbf{W}_{n+1})} + \epsilon f) \bigg|_{\epsilon = 0} \right].
    \end{align*}
\end{theorem}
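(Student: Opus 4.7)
The plan is to follow the template of Proposition 4 in \citet{gibbs2023conformal}, now incorporating the data-dependent pinball weight $\alpha(X_i)$ and the general convex penalty $\mathcal{P}$. Throughout, abbreviate $S_i := S(\mathbf{C}_i, \mathbf{W}_i)$.

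\emph{Step 1 (reducing the loss event to a dual statement).} Because $L(\emptyset,\cdot) = 0 \leq \lambda$ and $\tau \mapsto L(\hat{F}(\mathbf{C}_{n+1};\tau), \mathbf{W}_{n+1})$ is non-increasing by monotonicity of $L$ and $\hat{F}$, the definition of $S_{n+1}$ gives $L(\hat{F}(\mathbf{C}_{n+1};\tau), \mathbf{W}_{n+1}) \leq \lambda \iff \tau \geq S_{n+1}$. Combining this with the monotonicity of $S \mapsto \eta^S_{n+1}$ (which follows from the three complementary slackness cases listed immediately before the theorem, exactly as in the fixed-$\alpha$ argument), the definition $\hat{\tau}_{\textup{l.a. rand.}}(X_{n+1}) = \max\{S : \eta^S_{n+1} \leq U\}$ yields
\[
\bone\{L(\hat{F}(\mathbf{C}_{n+1}; \hat{\tau}_{\textup{l.a. rand.}}(X_{n+1})), \mathbf{W}_{n+1}) \leq \lambda\} = \bone\{\eta^{S_{n+1}}_{n+1} \leq U\}.
\]

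\emph{Step 2 (randomization and exchangeability).} Since $U \sim \text{Unif}([-\alpha(X_{n+1}), 1-\alpha(X_{n+1})])$ is independent of the data and $\eta^{S_{n+1}}_{n+1}$ lies in this interval, conditioning on the data and integrating out $U$ gives $\mmp(\eta^{S_{n+1}}_{n+1} \leq U \mid \text{data}) = 1 - \alpha(X_{n+1}) - \eta^{S_{n+1}}_{n+1}$, so the left-hand side of the claim equals $-\mme[f(X_{n+1}) \eta^{S_{n+1}}_{n+1}]$. The primal and dual programs are symmetric functions of the pairs $\{(X_i, S_i)\}_{i=1}^{n+1}$ (treating $S_{n+1}$ as a known oracle input), and by hypothesis the solver returns symmetric solutions, so $\{(X_i, \eta^{S_{n+1}}_i)\}_{i=1}^{n+1}$ inherits exchangeability from the input. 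Averaging over permutations of the index yields
\[
\mme[f(X_{n+1}) \eta^{S_{n+1}}_{n+1}] = \frac{1}{n+1}\mme\left[\sum_{i=1}^{n+1} f(X_i) \eta^{S_{n+1}}_i\right].
\]
Finally, stationarity of the Lagrangian along the direction $g \to g + \epsilon f$ --- well-defined thanks to the assumed existence of $\tfrac{d}{d\epsilon}\mathcal{P}(g + \epsilon f)\big|_{\epsilon=0}$ --- identifies this inner sum as $(n+1)\,\tfrac{d}{d\epsilon}\mathcal{P}(g_{S_{n+1}} + \epsilon f)\big|_{\epsilon=0}$. Combining the two displays gives the claimed identity.

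\emph{Main obstacle.} The technical crux is the monotonicity of $S \mapsto \eta^S_{n+1}$ used in Step 1; this is standard linear-programming sensitivity under strong duality and follows from complementary slackness exactly as in \citet{gibbs2023conformal}, since the data-adaptive levels only reshape the box constraint $\eta_i \in [-\alpha(X_i), 1-\alpha(X_i)]$ without altering how a coordinate responds to perturbations of its associated right-hand side. Beyond this, the pinball-weight change affects only the length-one randomization interval in Step 2 (where the shift $-\alpha(X_{n+1})$ cancels cleanly against the $1 - \alpha(X_{n+1})$ term), and the general penalty $\mathcal{P}$ contributes the right-hand derivative term through Lagrangian stationarity.
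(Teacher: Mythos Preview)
Your proposal is correct and follows essentially the same approach as the paper's proof: reduce the loss event to $\{\eta^{S_{n+1}}_{n+1}\le U\}$ via monotonicity of the conformity score and of $S\mapsto\eta^S_{n+1}$, integrate out the uniform to obtain $-\mme[f(X_{n+1})\eta^{S_{n+1}}_{n+1}]$, symmetrize over indices by exchangeability, and identify the resulting sum using KKT stationarity of the Lagrangian along $g\to g+\epsilon f$. The only minor difference is that the paper invokes Theorem~4 of \citet{gibbs2023conformal} directly for the monotonicity of $S\mapsto\eta^S_{n+1}$, whereas you appeal to complementary slackness and LP sensitivity; note that the three complementary-slackness implications alone do not yield monotonicity without the additional sensitivity argument you mention in your ``Main obstacle'' paragraph, so be sure to cite that result explicitly rather than the slackness cases.
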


This theorem has two key assumptions that may appear unusual at first glance. First, we have assumed that the dual solutions $\eta^{S(\mathbf{C}_{n+1},\mathbf{W}_{n+1})}$ are computed using an algorithm that is symmetric in the input data. This is an extremely minor assumption and is satisfied by any standard method for solving convex programs (e.g.~interior point solvers). For a much more detailed exposition about efficient algorithms for computing $\{S : \eta^S_{n+1} \leq U\}$, we refer the reader to the original work of \citet{gibbs2023conformal}. Second, we have assumed that strong duality holds. As we will see shortly, this assumption is always satisfied for the linear function classes considered in the main text. For more general vector spaces, \citet{gibbs2023conformal} prove that this condition holds for other common choices of $\mathcal{F}$ such as reproducing kernel Hilbert spaces and Lipschitz functions.

We now prove our main results.

\begin{proof}[Proof of Theorem \ref{thm:general_loss_control}]
    By Theorem 4 of \citet{gibbs2023conformal}, $S \mapsto \eta^S_{n+1}$ is non-decreasing in $S$. So, by the monotonicity assumption on $L(\cdot,\cdot)$ and our definition of the conformity score, we have that 
    \[
    \bone\{L(\hat{F}(\mathbf{C}_{n+1};\hat{\tau}_{\text{l.a. rand.}}(X_{n+1})), \mathbf{W}_{n+1}) \leq \lambda\} \iff \bone\{\eta^{S(\mathbf{C}_{n+1},\mathbf{W}_{n+1})}_{n+1} \leq U \}.
    \]
    Thus, 
    \begin{align*}
        & \mme[f(X_{n+1})(\bone\{L(\hat{F}(\mathbf{C}_{n+1};\hat{\tau}_{\text{l.a. rand.}}(X_{n+1})), \mathbf{W}_{n+1}) \leq \lambda\} - (1-\alpha(X_{n+1})))]\\
        & = \mme[f(X_{n+1})(\bone\{\eta^{S(\mathbf{C}_{n+1},\mathbf{W}_{n+1})}_{n+1} \leq U\} - (1-\alpha(X_{n+1})))]\\
        & = \mme[\mme_U[f(X_{n+1})(\bone\{\eta^{S(\mathbf{C}_{n+1},\mathbf{W}_{n+1})}_{n+1} \leq U\} - (1-\alpha(X_{n+1}))) \mid  X_{n+1}, \eta^{S(\mathbf{C}_{n+1},\mathbf{W}_{n+1})}_{n+1}]]\\
        & = -\mme[f(X_{n+1})\eta^{S(\mathbf{C}_{n+1},\mathbf{W}_{n+1})}_{n+1}].
    \end{align*}
    For ease of notation let $S_i := S(\mathbf{C}_i,\mathbf{W}_i)$, for $1 \leq i \leq n+1$. Investigating the case where $S = S_{n+1}$ gives us the Lagrangian
    \begin{align*}
 &  \sum_{i=1}^{n+1} (1-\alpha(X_i))p_i + \alpha(X_i)q_i + (n+1)\mathcal{P}(g_{S_{n+1}}) + \sum_{i=1}^{n+1} \eta^{S_{n+1}}_{i}(S_i - g_{S_{n+1}}(X_i) - p_i + q_i)  \\
 &  - \sum_{i=1}^{n+1} (\lambda_i p_i + \xi_i q_i).
\end{align*}
Now, following the calculations in the proof of Proposition 4 of \citet{gibbs2023conformal}, we find that taking a derivative along the direction $\epsilon \mapsto g_{S_{n+1}} + \epsilon f$ and applying KKT stationarity to the above gives the equation,
\[
0 = \frac{d}{d\epsilon} (n+1)\mathcal{P}(g_{S_{n+1}} + \epsilon f)\bigg|_{\epsilon = 0} - \sum_{i=1}^{n+1} \eta^{S_{n+1}}_i f(X_i).
\]
So, by the exchangeability of the data we have that 
\begin{align*}
-\mme[f(X_{n+1})\eta^{S(\mathbf{C}_{n+1},\mathbf{W}_{n+1})}_{n+1}]&  = -\frac{1}{n+1}\mme[\sum_{i=1}^{n+1} f(X_{i})\eta^{S_{n+1}}_{i}]\\
& = -\mme\left[ \frac{d}{d\epsilon} (n+1)\mathcal{P}(g_{S_{n+1}} + \epsilon f) \bigg|_{\epsilon = 0}  \right],
\end{align*}
as desired.
\end{proof}

\subsection{Proof of \Cref{thm:fixed_alpha_adapt_cov_gen_loss} and \Cref{thm:adapt_alpha_gen_loss}}

Theorems \ref{thm:fixed_alpha_adapt_cov_gen_loss} and \ref{thm:adapt_alpha_gen_loss} from the main text can now be proven directly as corollaries of Theorem \ref{thm:general_loss_control}.

\begin{proof}[Proof of Theorem \ref{thm:fixed_alpha_adapt_cov_gen_loss}]
    This result is the special case of Theorem \ref{thm:general_loss_control} in which $\mathcal{P}(\cdot) = 0$, $\alpha(X_i) = \alpha$ is a constant function, and $\mathcal{F}$ is a finite-dimensional linear class. Under this setup, the primal and dual programs outlined above are linear programs. Since the dual program admits the interior feasible solution $\eta = 0$, Slater's condition immediately implies that these programs satisfy strong duality.
\end{proof}

\begin{proof}[Proof of Theorem \ref{thm:adapt_alpha_gen_loss}]
        This result is the special case of Theorem \ref{thm:general_loss_control} in which $\mathcal{P}(\cdot) = 0$ and $\mathcal{F}$ is a finite-dimensional linear class. Similar to the previous result, the resulting primal and dual programs are linear and satisfy strong duality by Slater's condition.
\end{proof}

\subsection{Interpretation of the guarantees in terms of covariate shifts}\label{sec:app_cov_shift}

As discussed in the original work of \citet{gibbs2023conformal}, the guarantees on the loss we developed in Theorems \ref{thm:fixed_alpha_adapt_cov_gen_loss} and \ref{thm:adapt_alpha_gen_loss} can be readily interpreted as a form of robustness with respect to a class of distribution shifts affecting $P_X$ (but not $P_{Y \mid X}$) known as covariate shifts.

First, to precisely define such a shift, suppose that $f \in \mathcal{F}$ is non-negative. Recall that our features $X_{n+1} = X(P_{n+1},R_{n+1})$ are functions of the prompt and response. Consider the setting in which the training data, $\{(P_i,R_i,\mathbf{C}_i,\mathbf{W}_i)\}_{i=1}^n$ are sampled i.i.d.~from a distribution $Q$, while the test point is sampled from the ``covariate-shifted'' distribution
\[
(P_{n+1},R_{n+1}) \sim \frac{f(X_{n+1})}{\mme_Q[f(X)]}dQ_{(P,R)},\qquad (\mathbf{C}_{n+1},\mathbf{W}_{n+1}) \sim Q_{(\mathbf{C},\mathbf{W}) | (P,R)}.
\]
Here, we use the notation $\frac{f(X_{n+1})}{\mme_Q[f(X)]}dQ_{(P,R)}$ to refer to the distribution in which the covariate space is re-weighted by $f$, e.g.~using the subscript $f$ to denote the re-weighting, we have that for any set $A$, 
\[
\mathbb{P}_f(X_{n+1} \in A) = \frac{\mathbb{E}_Q[f(X) \mathbf{1}\{X \in A\}]}{\mathbb{E}_Q[f(X)]}.
\]

Then, using these definitions, our Theorems \ref{thm:fixed_alpha_adapt_cov_gen_loss} and \ref{thm:adapt_alpha_gen_loss} have the following immediate corollaries. 

\begin{corollary}[Extension of Theorem \ref{thm:fixed_alpha_adapt_cov_gen_loss}]
Let $\mathbb{P}_f$ denote the covariate shift setting described above. Then, under the setting and assumptions of Theorem \ref{thm:fixed_alpha_adapt_cov_gen_loss}, 
   \[
\mathbb{P}_f(L(\hat{F}(\mathbf{C}_{n+1} ; \hat{\tau}_{\textup{rand}}(X_{n+1}) ), \mathbf{W}_{n+1}) \leq \lambda ) = 1-\alpha, \ \forall f \in \{h \in \mathcal{F} : 0 <\mathbb{E}_Q[h(X)] < \infty\}.
\] 
\end{corollary}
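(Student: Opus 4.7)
The plan is a Radon--Nikodym change of measure that recasts the covariate-shifted probability as a weighted expectation under the exchangeable distribution, after which \Cref{thm:fixed_alpha_adapt_cov_gen_loss} finishes the argument directly. Let $Q^{n+1}$ denote the joint law in which all $n+1$ tuples $(P_i,R_i,\mathbf{C}_i,\mathbf{W}_i)$ are i.i.d.~from $Q$, and let $\mathbb{P}_f^{n+1}$ denote the joint law in which the first $n$ are i.i.d.~from $Q$ and the $(n+1)$-st is drawn from the covariate-shifted distribution. The key observation is that because the covariate shift only re-weights $(P,R)$ and leaves $Q_{(\mathbf{C},\mathbf{W})\mid(P,R)}$ unchanged, the Radon--Nikodym derivative takes the simple form
\[
\frac{d\mathbb{P}_f^{n+1}}{dQ^{n+1}} \;=\; \frac{f(X_{n+1})}{\mme_Q[f(X)]},
\]
which is well-defined thanks to the non-negativity of $f$ and the assumption $0 < \mme_Q[f(X)] < \infty$.

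Next, I would abbreviate the coverage indicator as $E := \bone\{L(\hat{F}(\mathbf{C}_{n+1};\hat{\tau}_{\text{rand}}(X_{n+1})),\mathbf{W}_{n+1}) \leq \lambda\}$. Applying the change of measure gives
\[
\mathbb{P}_f(E = 1) \;=\; \mme_{\mathbb{P}_f^{n+1}}[E] \;=\; \frac{1}{\mme_Q[f(X)]}\,\mme_{Q^{n+1}}\!\left[f(X_{n+1})\,E\right].
\]
Under $Q^{n+1}$ the $n+1$ tuples are exchangeable, so all the hypotheses of \Cref{thm:fixed_alpha_adapt_cov_gen_loss} are satisfied, and applying the theorem to the specific $f \in \mcf$ yields
\[
\mme_{Q^{n+1}}\!\left[f(X_{n+1})(E - (1-\alpha))\right] \;=\; 0,
\]
hence $\mme_{Q^{n+1}}[f(X_{n+1})\,E] = (1-\alpha)\,\mme_Q[f(X)]$. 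Substituting back gives $\mathbb{P}_f(E=1) = 1-\alpha$, which is the claim.

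The proof is essentially a routine calculation; there is no real technical obstacle. The one thing that requires a moment of care is recognizing that the marginal $\mcf$-guarantee of \Cref{thm:fixed_alpha_adapt_cov_gen_loss} is \emph{equivalent} to validity under the $f$-tilted covariate shift, via the Radon--Nikodym identification above. Once this correspondence is made, the finiteness condition $0 < \mme_Q[f(X)] < \infty$ ensures that the tilted measure is a bona fide probability distribution and that the division in the change-of-measure step is legitimate; non-negativity of $f$ (inherited from the setup) ensures $\mathbb{P}_f$ is a probability measure in the first place.
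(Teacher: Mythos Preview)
Your proposal is correct and matches the paper's approach: the paper states the corollary as an ``immediate'' consequence of \Cref{thm:fixed_alpha_adapt_cov_gen_loss} without writing out a proof, and your Radon--Nikodym change-of-measure computation is precisely the routine unpacking that justifies the word ``immediate.''
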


\begin{corollary}[Extension of Theorem \ref{thm:adapt_alpha_gen_loss}]
Let $\mathbb{E}_f[\cdot]$ denote expectations with respect to the covariate shift setting described above. Then, under the setting and assumptions of Theorem \ref{thm:adapt_alpha_gen_loss}, 
    \begin{align*}
    & \mme_f[(\bone\{L(\hat{F}(\mathbf{C}_{n+1};\hat{\tau}_{\textup{l.a., rand.}}(X_{n+1})), \mathbf{W}_{n+1}) \leq \lambda\} - (1-\alpha(X_{n+1})))] = 0,\\
    & \hspace{2cm} \forall f \in \{h \in \mathcal{F} : 0 <\mathbb{E}_Q[h(X)] < \infty\}.
    \end{align*}
\end{corollary}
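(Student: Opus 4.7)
The plan is to reduce the corollary to Theorem \ref{thm:adapt_alpha_gen_loss} by a direct change of measure. Writing $\mathbf{D}_{n+1} = (P_{n+1}, R_{n+1}, \mathbf{C}_{n+1}, \mathbf{W}_{n+1})$, I would first observe that under both $Q$ (the i.i.d.~source) and the covariate-shifted law $\mathbb{P}_f$, the calibration tuples $\{\mathbf{D}_i\}_{i=1}^n$ are i.i.d.~from $Q$, and the external randomization variable $U$ used to construct $\hat{\tau}_{\textup{l.a., rand.}}$ is drawn independently of the data in both models. The only difference between the two joint laws is the distribution of $\mathbf{D}_{n+1}$: by the definition of $\mathbb{P}_f$, the conditional law of $(\mathbf{C}_{n+1}, \mathbf{W}_{n+1})$ given $(P_{n+1}, R_{n+1})$ agrees with $Q_{(\mathbf{C},\mathbf{W})\mid(P,R)}$, while the marginal of $(P_{n+1}, R_{n+1})$ is tilted by the factor $f(X_{n+1})/\mathbb{E}_Q[f(X)]$.

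Consequently, the Radon--Nikodym derivative of $\mathbb{P}_f$ with respect to $Q$ on the full data (calibration + test + $U$) depends only on the test feature and equals $f(X_{n+1})/\mathbb{E}_Q[f(X)]$. Define
\[
T \;:=\; \bone\bigl\{L\bigl(\hat{F}(\mathbf{C}_{n+1};\hat{\tau}_{\textup{l.a., rand.}}(X_{n+1})),\, \mathbf{W}_{n+1}\bigr) \leq \lambda\bigr\} \;-\; (1 - \alpha(X_{n+1})).
\]
Applying the change-of-measure formula yields
\[
\mathbb{E}_f[T] \;=\; \mathbb{E}_Q\!\left[\frac{f(X_{n+1})}{\mathbb{E}_Q[f(X)]}\, T\right] \;=\; \frac{1}{\mathbb{E}_Q[f(X)]}\, \mathbb{E}_Q\bigl[f(X_{n+1})\, T\bigr].
\]

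Since $f \in \mathcal{F}$ and the hypotheses of Theorem \ref{thm:adapt_alpha_gen_loss} all hold under the i.i.d.~law $Q$, that theorem gives $\mathbb{E}_Q[f(X_{n+1})\, T] = 0$. The assumption $0 < \mathbb{E}_Q[f(X)] < \infty$ then lets us divide, producing $\mathbb{E}_f[T] = 0$, which is exactly the claim. I do not expect any substantive obstacle: once the Radon--Nikodym factor is correctly identified, the argument is essentially a one-line invocation of the previously established marginal identity. The only care required is in checking that (i) the independent randomizer $U$ does not disrupt the change of measure (it does not, because $U$ is drawn the same way under both $Q$ and $\mathbb{P}_f$ and the tilting factor depends only on $X_{n+1}$), and (ii) that $\hat{\tau}_{\textup{l.a., rand.}}(X_{n+1})$ is a measurable function of the same calibration data and test feature under both laws, so that the integrand $T$ is the same random variable in both expectations.
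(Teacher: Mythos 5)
Your proposal is correct and matches the paper's reasoning: the paper presents this corollary as an immediate consequence of Theorem \ref{thm:adapt_alpha_gen_loss}, precisely via the change of measure $\mme_f[T] = \mme_Q[f(X_{n+1})T]/\mme_Q[f(X)]$ that you spell out, using that the tilt depends only on $X_{n+1}$ and that the i.i.d.\ law $Q$ satisfies the theorem's exchangeability hypothesis. The only detail worth flagging is that the paper additionally assumes $f$ is non-negative so that $\mathbb{P}_f$ is a genuine probability distribution, which your argument implicitly relies on.
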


\section{Details of the level-adaptive procedure}

\subsection{Method for learning $\alpha(\cdot)$}\label{sec:app_est_alpha}

To make our approach to learning $\alpha(\cdot)$ precise, we formally define our quality criterion using a binary function $Q(\cdot, \cdot)$ that takes in a data point and a cutoff and returns $1$ or $0$ if the criterion is satisfied or not, respectively. For example, for the claim retention objective described in \Cref{sec:preview}, 
\[
Q(\mathbf{C}, \hat{\tau}) := \bone \left \{\frac{|F(\mathbf{C}; \hat{\tau})|}{|\mathbf{C}|} \geq 0.7 \right \}.
\]

As discussed in the main text, when running our method we split our training data into a portion for fitting $\alpha(\cdot)$ and portion for calibrating the conformity score. Then, we further divide the portion of the data that is used to fit $\alpha(\cdot)$ into two folds. We use the first fold to run the conditional conformal procedure at a fixed grid of quantiles, $\alpha \in \{0.01, \dots, 0.99\}$, and the second dataset to evaluate $Q(\mathbf{C}_{i}; \hat{\tau}_\alpha(X_{i}))$ for each choice of $\alpha$. Here, $\hat{\tau}_\alpha(X_{i})$ denotes the conditional conformal cutoff evaluated for test claim $\mathbf{C}_{i}$ at fixed level $1 - \alpha$. 

After collecting these data, we compute the maximum $\alpha$ at which the quality criterion is achieved. Formally, for each point $(\mathbf{C}_i,X_i)$ in the dataset held out from fitting, we compute
\begin{align}
    \alpha^*_i = \inf \{\alpha : \forall \beta \geq \alpha,\ Q(\mathbf{C}_i, \hat{\tau}_\beta(X_i)) = 1\} . \label{eq:alpha_threshold}
\end{align}

With this data in hand, our goal now is to fit a function $\alpha(\cdot)$ that predicts $\alpha^*_i$ from $(X_i, \alpha^*_i)$. In principle, this could be done using any number of regression algorithms. In our experiments, we choose to estimate a quantile of $\alpha^*_i \mid X_i$ using the same function class we used to run our conditional conformal procedure. We then truncate these estimates to lie above $0.1$. See \Cref{fig:factscore_alpha_est}  for a plot of this estimated quantile and the underlying data points used to fit this regression. Our choices of quantile and truncation are motivated by our desire to ensure that the adaptive level will guarantee the quality criterion for most test points without issuing completely vacuous guarantees, i.e., fitting some $\alpha(\cdot)$ whose range is concentrated around $1$ or $0$. 

\begin{figure}
    \centering
    \includegraphics[width=0.7\textwidth]{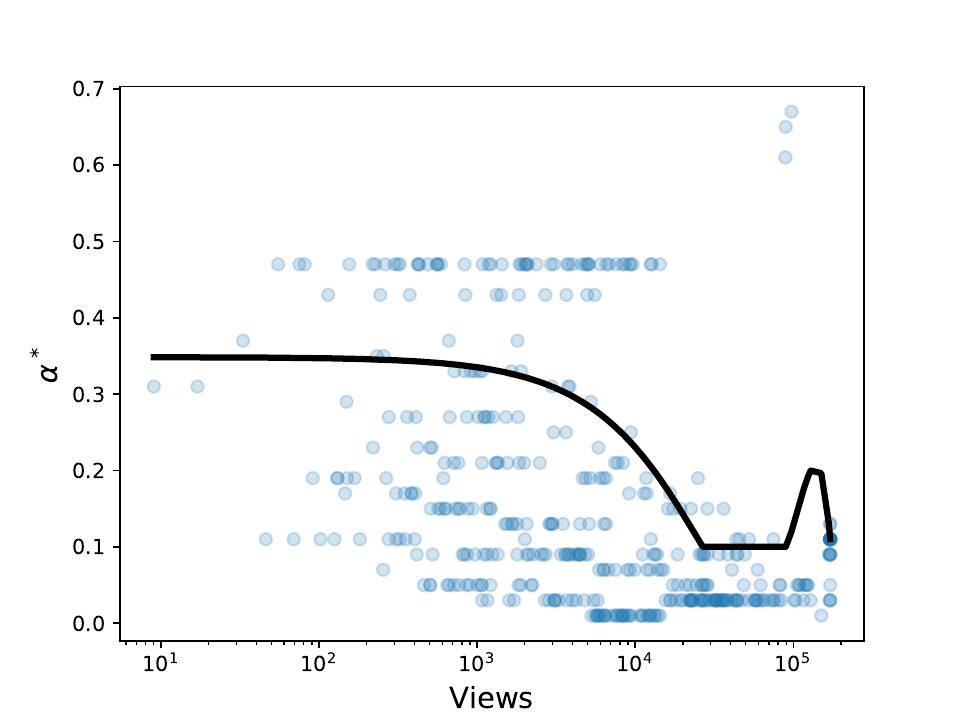}
    \caption{For a hold-out set of size $424$, we plot the optimal level threshold, $\alpha^*_i$ for claim retention \eqref{eq:alpha_threshold} against the number of Wikipedia page views (on the log scale) for the associated person. Letting $v_i$ denote the view count for person $i$, the black line denotes the estimate of the $0.25$-quantile of $\alpha^*_i \mid v_i$ obtained by regressing over the function class given by $\{\beta_0 + \sum_{i = 1}^3 \beta_i v^i \mid \beta \in \R^4\}$.}
    \label{fig:factscore_alpha_est}
\end{figure}

\subsection{Dependence of the level-adaptive guarantee on the function class} \label{sec:app_func_class}

The quality of our level-adaptive guarantee strongly depends on the choice of function class, $\mathcal{F}$. To understand this, recall that split conformal prediction corresponds to taking $\mathcal{F}$ to be the class of constant functions, $\mathcal{F} = \{x \mapsto \beta : \beta \in \mmr\}$. In this case, \Cref{thm:adapt_alpha_gen_loss} states that
\[
\mme[\bone\{L(\mathbf{C}_{n+1},\mathbf{W}_{n+1}) \leq \lambda\}] =  \mme[(1-\alpha(X_{n+1}))],
\]
i.e.~with probability $\mme[1-\alpha(X_{n+1})$] the error rate of our method lies at the target level. This guarantee is extremely weak; our objective is to ensure that the nominal probability is pointwise close to the true level, i.e.~$|\mathbb{P}(L(\mathbf{C}_{n+1},\mathbf{W}_{n+1}) \leq \lambda \mid \alpha(X_{n+1})) - (1-\alpha(X_{n+1}))|$ is small. In the main text, we saw empirical results in which our methods were able to approximately achieve this target. These result were obtained by designing $\mathcal{F}$ to include functions that depend on $\alpha(X_{n+1})$.

\begin{figure}[ht]
    \centering
    \includegraphics[width=\textwidth]{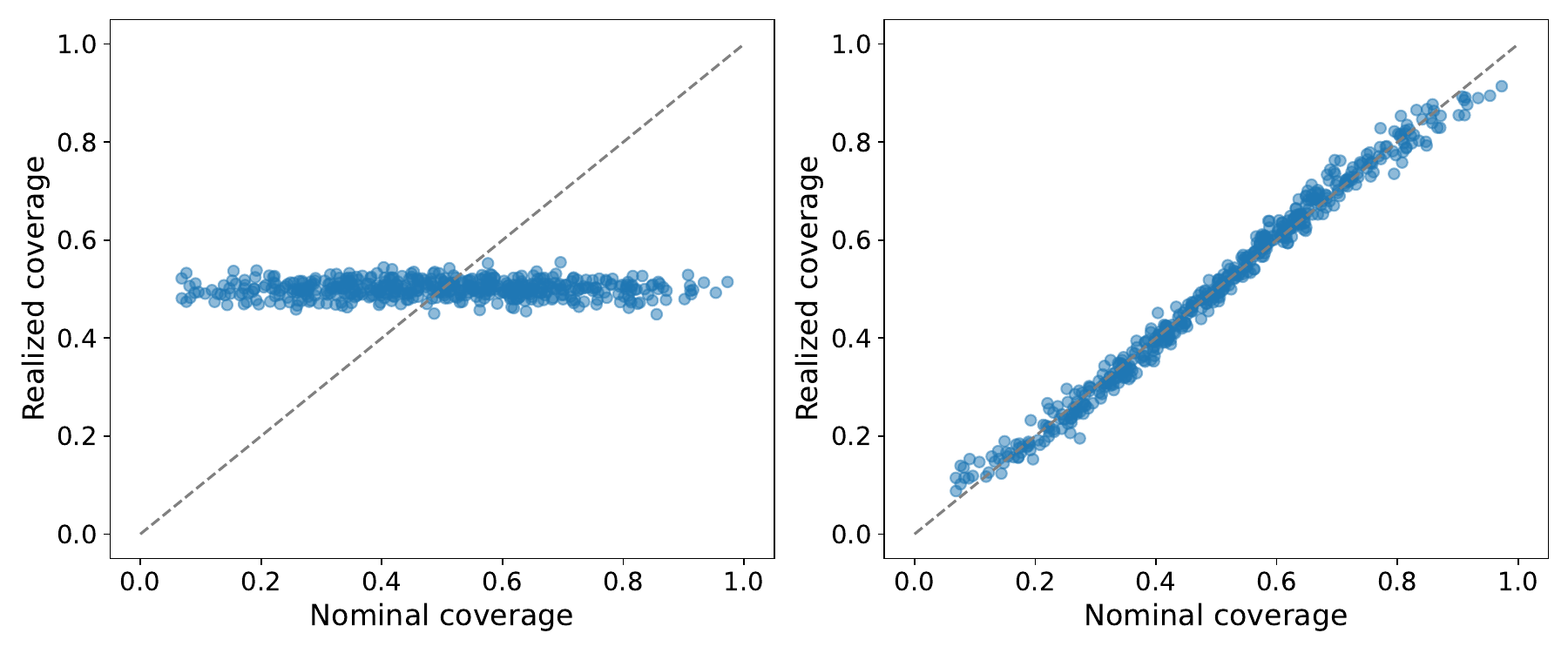}
    \caption{Comparison of the realized and nominal levels of our level-adaptive method for various choices of $\mathcal{F}$. Here, $(X_i,Y_i) \sim \mathcal{N}(0,I_2)$ and we set the conformity score to be simply $S(X_i,Y_i) = Y_i$. In this experiment, we use our level-adaptive method (\Cref{sec:level-adaptive} in the main text) to construct prediction sets for $Y_i$ given covariates $X_i$. We set $\alpha(X) = \sigma(X)$ where $\sigma(\cdot)$ denotes the sigmoid function with temperature $1$. We then run the level-adaptive method on a calibration set of size $n = 1000$ and evaluate the method on $1$ test point; the plotted points (center, right) are obtained from $500$ trials. The left panel shows results for $\mathcal{F} = \{x \mapsto \beta : \beta \in \mmr\}$, while the right panel displays results for $\mathcal{F}=\{(1, \alpha(X))^\top\beta : \beta \in \mmr^2\}$. }
    \label{fig:synthetic_feature_comp}
\end{figure}

To demonstrate the importance of this choice of $\mathcal{F}$ more explicitly, \Cref{fig:synthetic_feature_comp} compares the realized difference between the true coverage, $\mmp(Y_{n+1} \in \hat{C}(X_{n+1}) \mid \alpha(X_{n+1}))$ and the nominal level $1-\alpha(X_{n+1})$ when $\mathcal{F} = \{x \mapsto \beta : \beta \in \mmr\}$ and $\mathcal{F}=\{(1, \alpha(X))^\top\beta : \beta \in \mmr^2\}$ on a simple synthetic dataset. As anticipated, when $\mathcal{F}$ consists only of constant functions, the nominal guarantee is spurious, and the true probability of coverage is uncorrelated with the claimed level (left panel). This is corrected by adding $\alpha(X)$ to the feature space of $\mathcal{F}$; despite being quite simple this choice empirically achieves the desired calibration (right panel). We leave a more substantial investigation of the trade-off between $\mathcal{F}$ complexity, empirical calibration, and efficiency to future work.

\section{Details of the Boosting Procedure}\label{sec:app_boost}
For the boosting procedure, we will assume that we are working with the derandomized variant of the conditional conformal method of \citet{gibbs2023conformal}. In addition, these results will hold only for the finite-dimensional function class variant of their method with no regression penalty, i.e., in the notation of \Cref{sec:app_cond_conf} we assume that
\[
\mathcal{F} = \{\Phi(\cdot)^\top \beta : \beta \in \R^{d}\},\qquad  \mathcal{P}(\cdot) = 0.
\]

We first restate and then prove \Cref{prop:valid_grad}.
\begin{proposition} \label{prop:valid_grad_app}
    Let $\hat\tau_i(\theta)$ denote the non-randomized filtering threshold defined via \eqref{eq:main_gen_reg}. Assume that the threshold is finite and that the augmented quantile regression for all $S > \hat\tau_{i}(\theta)$ admits a unique non-degenerate basic solution. Let $B$ denote this basis. Then, $\hat\tau_i(\theta)$ has derivative 
    \begin{equation}
        \partial_{\theta} \hat{\tau}_i(\theta) = \Phi(X_{n + i})^\top \left(\Phi(X)_B^{-1} \partial_{\theta} S_B(\theta) \right). 
    \end{equation}
\end{proposition}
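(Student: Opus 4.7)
The plan is to reduce the computation of $\hat\tau_i(\theta)$ to a classical linear programming sensitivity analysis. First, I would rewrite the augmented quantile regression \eqref{eq:main_gen_reg} as the linear program in variables $(\beta, p, q) \in \mmr^d \times \mmr_+^{n+1} \times \mmr_+^{n+1}$ that is spelled out in \Cref{sec:app_cond_conf}, with the imputed test score $S$ appearing as a parameter in the $(n+1)$-st equality constraint. The basis $B$ then indexes the interpolation constraints (those with $p_j = q_j = 0$) that are active at the unique non-degenerate optimum.

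Next, I would pin down the structure of $B$ for $S$ strictly above $\hat\tau_i(\theta)$. By definition $\hat\tau_i(\theta) = \sup\{S : S \leq g_S(X_{n+i})\}$, so any such $S$ satisfies $S > g_S(X_{n+i})$ and hence has $p_{n+1} > 0$ at the LP optimum; this forces the test index out of the basis, so $B \subseteq \{1, \ldots, n\}$ with $|B| = d$. The interpolation identities $S_j(\theta) = \Phi(X_j)^\top \beta$ for $j \in B$ then give $\beta = \Phi(X)_B^{-1} S_B(\theta)$ and consequently $g_S(X_{n+i}) = \Phi(X_{n+i})^\top \Phi(X)_B^{-1} S_B(\theta)$, an expression that is independent of $S$ throughout this regime. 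Taking the one-sided limit $S \downarrow \hat\tau_i(\theta)$ together with the continuity of the LP solution at a non-degenerate basis yields the closed-form identity \eqref{eq:basis_sol}.

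The final step is to differentiate in $\theta$. Here I would appeal to the standard sensitivity result that a unique non-degenerate optimal basis of a parametric LP is locally constant under smooth perturbations of the right-hand side. Since $S(\theta)$ is differentiable in $\theta$ and the features $\Phi(X_j)$ do not depend on $\theta$, perturbing $\theta$ only perturbs the constraint right-hand sides and therefore preserves the basis $B$. Hence \eqref{eq:basis_sol} holds identically on a $\theta$-neighborhood, and direct differentiation of the matrix product with $\Phi(X)_B^{-1}$ treated as a constant matrix produces \eqref{eq:tau_jac}.

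The main subtlety, I expect, is justifying basis stability precisely at the transition value $S = \hat\tau_i(\theta)$, where a small perturbation of $\theta$ could in principle shift $\hat\tau_i$ across a neighboring score and thereby change $B$. The uniqueness and non-degeneracy hypothesis rules this out: by Theorem 4 of \citet{gibbs2023conformal} the dual coordinate $\eta_{n+1}^S$ is monotone in $S$, and under non-degeneracy it depends continuously on $\theta$ away from ties, so the complementary slackness pattern defining $B$ persists in an open $\theta$-neighborhood. It therefore suffices to work with the one-sided limit from above, which is exactly what the preceding paragraphs exploit.
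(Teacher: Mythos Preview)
Your proposal is correct and follows essentially the same route as the paper: both arguments rest on (i) the observation that for $S>\hat\tau_i(\theta)$ the test index drops out of the interpolation basis, so $\beta_S$ is constant in $S$, and (ii) LP sensitivity analysis under the uniqueness/non-degeneracy hypothesis to freeze the basis $B$ when $\theta$ is perturbed. Two minor differences: the paper phrases the sensitivity step as stability under perturbation of the \emph{objective} coefficients (implicitly working in the dual), whereas you correctly recognize it as a right-hand-side perturbation of the primal; and the paper closes the ``main subtlety'' you flag not via continuity of $\eta_{n+1}^S$ but by fixing a reference level $S=\hat\tau_i(\theta_0)+1$ and proving directly, via a short $\leq/\geq$ argument, that $\hat\tau_i(\theta)=\Phi(X_{n+i})^\top\hat\beta_{\hat\tau_i(\theta_0)+1}(\theta)$ holds identically on a neighborhood of $\theta_0$.
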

\begin{proof}
Assume without loss of generality that $i = 1$. We drop the subscript on $\hat\tau$ for notational convenience. To differentiate between the imputed value of the $(n + 1)$-st score and the vector of all observed scores, we will use $S$ to refer to the former and $\mathbf{S}$ to the latter. When the optimal basis is unique, we will assume that the primal solution is defined via the interpolator of the basis. Otherwise, we assume it is obtained via some symmetric algorithm.

First, we establish that $\theta \mapsto \Phi(X_{n + 1})^\top \beta_{S}(\theta)$ is locally linear in $\theta$ for $S > \hat\tau(\theta)$. First, observe that the non-interpolated points of the quantile regression correspond to the non-basic variables in the solution, and that their residuals are equivalent to the non-basic ``reduced costs'' of the LP. Standard results from LP sensitivity analysis establish that when these reduced costs are bounded away from $0$ (implied by our assumptions of non-degeneracy and uniqueness), the basis of the LP is unchanged under sufficiently small perturbations of the objective coefficients (i.e., for perturbations of $\theta$) \cite{dantzig2003linear}. Since $\beta_S$ is defined by the solution to a linear system of equations involving the basis, the local constancy of the basis implies the claim of local linearity. Second, we establish that $S \mapsto \beta_S(\theta)$ for $S > \hat\tau(\theta)$ is constant. This is immediate from the KKT conditions for \eqref{eq:gen_loss_qr}. The optimal basis is unchanged for all $S > \hat\tau(\theta)$.

Fix any $\theta_0$. We claim that there exists a ball around $\theta_0$ such that $\hat{\tau}(\theta) = \Phi(X_{n+1})^\top \beta_{\hat{\tau}(\theta_0) + 1}(\theta)$. 

To prove this, first note that we know that for $\theta$ sufficiently close to $\theta_0$, $\hat{\tau}(\theta_0) + 1 > \Phi(X_{n+1})^\top \hat{\beta}_{\hat{\tau}(\theta_0) + 1}(\theta)$. Since $S \mapsto \beta_S(\theta)$ is constant above $\hat{\tau}(\theta)$, this immediately implies that for $S > \Phi(X_{n+1})^\top \hat{\beta}_{\hat{\tau}(\theta_0) + 1}(\theta)$, $\hat{\beta}_{S}(\theta) = \hat{\beta}_{\hat{\tau}(\theta_0) + 1}(\theta)$ and thus $S > \Phi(X_{n+1})^\top \hat{\beta}_{S}(\theta)$. Hence, $\hat{\tau}(\theta) \leq \Phi(X_{n+1})^\top \hat{\beta}_{\tau(\theta_0) + 1}(\theta)$.

Now, assume by contradiction that $\hat{\tau}(\theta) < \Phi(X_{n+1})^\top \hat{\beta}_{\tau(\theta_0) + 1}(\theta)$. For ease of notation let $v = \Phi(X_{n+1})^\top \hat{\beta}_{\tau(\theta_0) + 1}(\theta)$. Now, since $\hat{\tau}(\theta) < v$ we must have $v > \Phi(X_{n+1})^\top \hat{\beta}_{v}(\theta)$. Recall that we additionally know that $\hat{\tau}(\theta_0) + 1 > \Phi(X_{n+1})^\top \hat{\beta}_{\hat{\tau}(\theta_0) + 1}(\theta)$. So, by our assumption that $S \mapsto \beta_S(\theta)$ is constant above $\hat{\tau}(\theta)$ we find that $\hat{\beta}_{\tau(\theta_0) + 1}(\theta) = \hat{\beta}_{v}(\theta)$. Thus, $v = \Phi(X_{n+1})^\top \hat{\beta}_{\tau(\theta_0) + 1}(\theta) = \Phi(X_{n+1})^\top \hat{\beta}_{v}(\theta)$, which gives the desired contradiction. 

The above proves that $\tau(\theta) = \Phi(X_{n+1})^\top \beta_{\tau(\theta_0) + 1}(\theta)$ locally in a ball around $\theta_0$. The desired result follows by our previous proof of the differentiability of the right hand-side.

\end{proof}

With this result in hand, our boosting procedure is now given in detail in Algorithm~\ref{alg:boosting_alg} below. In this algorithm we use the notation $\Phi(I) = [\Phi(X_i)]_{i \in I}$ and $S_{\theta}(I) = (S_{\theta}(\mathbf{C}_i,\mathbf{W}_i))_{i \in I}$ to refer the feature matrix and scores for the data points in $I$. 

\begin{algorithm}
 \KwData{Boosting dataset $\mathcal{D}_{\text{boost}} = \{\mathbf{D}_i\}_{i = 1}^{m}$, conformity score function $S_\theta(\cdot, \cdot)$, function class basis $\mathbf{\Phi}$, initialization $\theta_0$, sigmoid temperature $\lambda$, level $\alpha$, boosting iteration count $T$.}
 \For {$t \in [T]$} {
    $\ell = 0$\;
  $\mathcal{D}_1, \mathcal{D}_2 = \textsf{RandomSplit}(\mathcal{D}_{\text{boost}})$ \;
    \For{$i \in \mathcal{D}_2$} {
    $B =$ \textsf{getOptimalBasis}$(\alpha, \Phi(\mathcal{D}_1), S_{\theta}(\mathcal{D}_1), \Phi(\{i\}), S_{\theta_{t-1}}(\{i\}))$\;
    $\hat\beta_i = \Phi_B^{-1}(\mathcal{D}_1 \cup \{i\}) S_B(\mathcal{D}_1  \cup \{i\})$\;
    $\ell = \ell + \sum_{j=1}^{k_i} \sigma_\lambda(\Phi(\{i\})^\top \hat\beta_i - p_\theta(P_i, C_{ij})) $\;
    }
  $\theta_{t} = \text{Adam}(\ell,\theta_{t-1})$.step()\;
 }
 \Return $\theta_T$.
 \caption{\textit{Conditional boosting}}
 \label{alg:boosting_alg}
\end{algorithm}

In Algorithm \ref{alg:boosting_alg} above, $\textsf{getOptimalBasis}(\cdot)$ refers to the subroutine that computes the optimal basis $B$ appearing in the statement of \Cref{prop:valid_grad_app} for text point $i$.\\

In order to decrease computational complexity, the experiments in this paper are run using a simplified version of Algorithm \ref{alg:boosting_alg}. In particular, the subroutine \textsf{getOptimalBasis} obtains the optimal basis in Algorithm~\ref{alg:boosting_alg} by identifying the $\text{dim}(\Phi)$ points that are interpolated by the regression when $S > \hat\tau_i(\theta)$. Identifying this set is computationally burdensome since each gradient step in $\theta$ requires computing $\hat\tau_i(\theta)$ (and therefore $\hat\beta_i$) for each point in the second split. Thus, the gradient computation step scales quadratically in $|\mathcal{D}_2|$. A linear-time approximation to the gradient can be obtained by defining $B$ as the optimal basis from the quantile regression fit to $\mathcal{D}_1$ alone. This approximation, which does not account for the test-point augmentation in the conditional conformal procedure, is much faster since it yields a single $\hat\beta$ to use for all the test points. The experiments in this paper are run using this simplified procedure. To concretely modify Algorithm~\ref{alg:boosting_alg}, $B$ is defined as the optimal basis of the quantile regression fit to $\mathcal{D}_1$ alone, the computation in the nested for loop is removed, and the estimated threshold in the sigmoid function becomes $\Phi^\top(\{i\}) \hat\beta$. 

\section{Additional experiments} \label{sec:app_experiment_addl}

\subsection{Additional results for \Cref{sec:lfmqa}} \label{sec:app_experiment_medqa}
\paragraph{Additional details for the experiment set-up}
The function class $\mathcal{F}$ is defined by the linear combination of an intercept, the number of characters in the prompt, the number of characters in the response, the mean frequency score assigned to the claims, the standard deviation of the frequency scores assigned to the claims, and group-indicators corresponding to the source dataset. The results shown in \Cref{fig:alpha_calibration} are obtained by running the conditional boosting algorithm for $1000$ steps using the Adam optimizer with learning rate set to $0.001$. 

We estimate $\alpha(\cdot)$ by dividing the first split of the data into two further sub-parts. On the first half of this further subdivisionx ($n = 720$), we run the fixed-level conditional conformal procedure (with the same $\mathcal{F}$ used to boost the scores) for each $\alpha$ in a grid of $50$ evenly spaced values between $0$ and $1$. Using the second half of this dataset, we evaluate the percentage of retained claims at each $\alpha$. We record the smallest $\alpha$ for which at least $70\%$ of claims are preserved at \emph{all larger} values of $\alpha$. We then obtain our estimate of $\alpha(\cdot)$ by fitting a $0.85$-quantile regression over $\mathcal{F}$ to this dataset. For interpretability and numerical stability in the next step, the final output of the function is truncated to lie between $0.1$ and $0.5$.

\paragraph{Additional experiments}
\Cref{fig:ablation-med} decomposes the effect of each method in this paper on claim retention and calibration for the \textbf{MedLFQA} benchmark. For a fixed-level guarantee, we observe that the boosting method substantially improves claim retention. When we adapt the level to guarantee high claim retention, the improvement of the boosting method is visible in the second panel of \Cref{fig:ablation-med}. Namely, we find that when the level-adaptive method is augmented with boosted scores we obtain the same claim retention while issuing stronger guarantees as compared to the un-boosted scores. Finally, the third panel of \Cref{fig:ablation-med} verifies that our reported values of $1-\alpha(X_{n+1})$ (approximately) match the realized error rates.

\begin{figure}[ht]
    \centering
    \includegraphics[width=\textwidth]{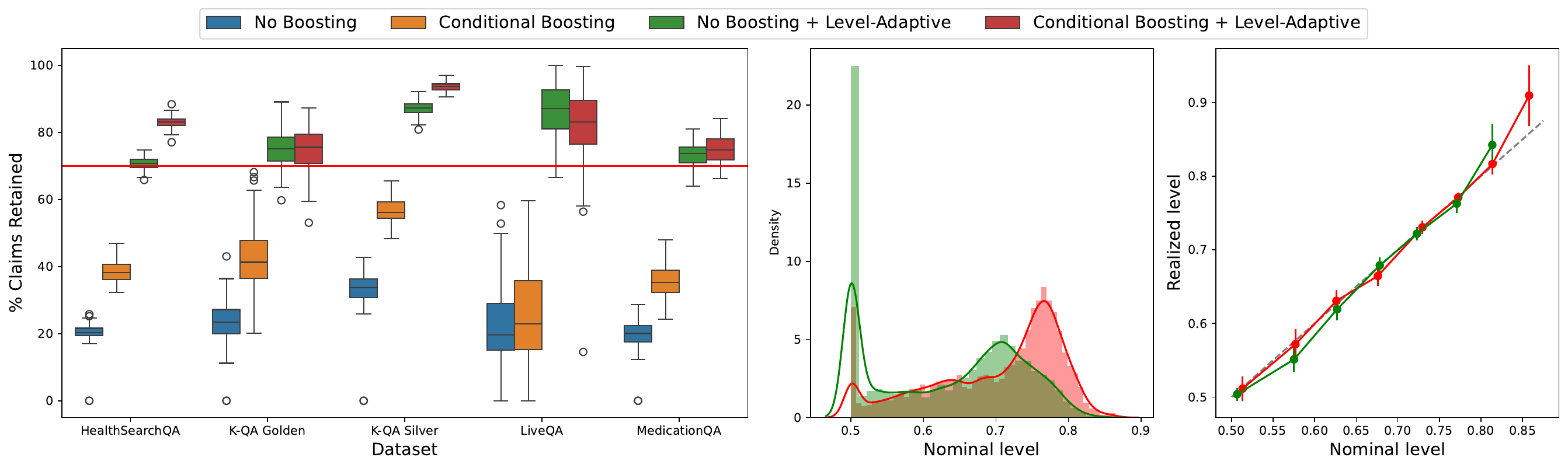}
    \caption{The left panel of the figure shows the percentage of claims retained by various methods on the \textbf{MedLFQA} benchmark of Jeong et al. (2024). This benchmark contains many datasets each of which is displayed on the x-axis. Before boosting, we define our claim score by an equally-weighted ensemble of previously published claim scoring functions. We run $100$ trials in which we resample a boosting/$\alpha(\cdot)$-estimation split ($n = 1441$), calibration split ($n = 2354$), and test split ($n = 500$). We plot $4$ filtering methods: the fixed-level ($1 - \alpha = 0.9$) method that guarantees conditional validity over the function class given by dataset indicators and prompt metadata (prompt length, response length, as well as the mean and standard deviation of the ``log probability'' claim score over the output) (blue), the same conditionally valid method with adaptive level $\alpha(X_{n+1})$ (orange), the conditionally valid method with boosted scores (green), and a combination method that incorporates both conditional-boosting and level-adaptive CP (red). All methods are set-up to ensure that the final output contains $\mathbf{0}$ false claims. The middle panel shows that boosting allows the level-adaptive procedure to issue guarantees with higher confidence. The right panel verifies that the reported nominal levels $1-\alpha(X_{n+1})$ match the empirical error frequencies over equi-spaced bins of width $0.05$; these bin indicators are included in the function class used in the level-adaptive procedure.}
    \label{fig:ablation-med}
\end{figure}

Using the same experimental set-up as the ablations, \Cref{fig:med_claimscores} compares the effect of the choice of claim scoring method on the nominal guarantee offered by the level-adaptive procedure. Note that the log-probability and frequency claim scores appear to offer the strongest set of guarantees. However, the markedly lower level of claim retention for the log-probability score make the guarantees difficult to directly compare.

\begin{figure}
    \centering
    \includegraphics[width=\linewidth]{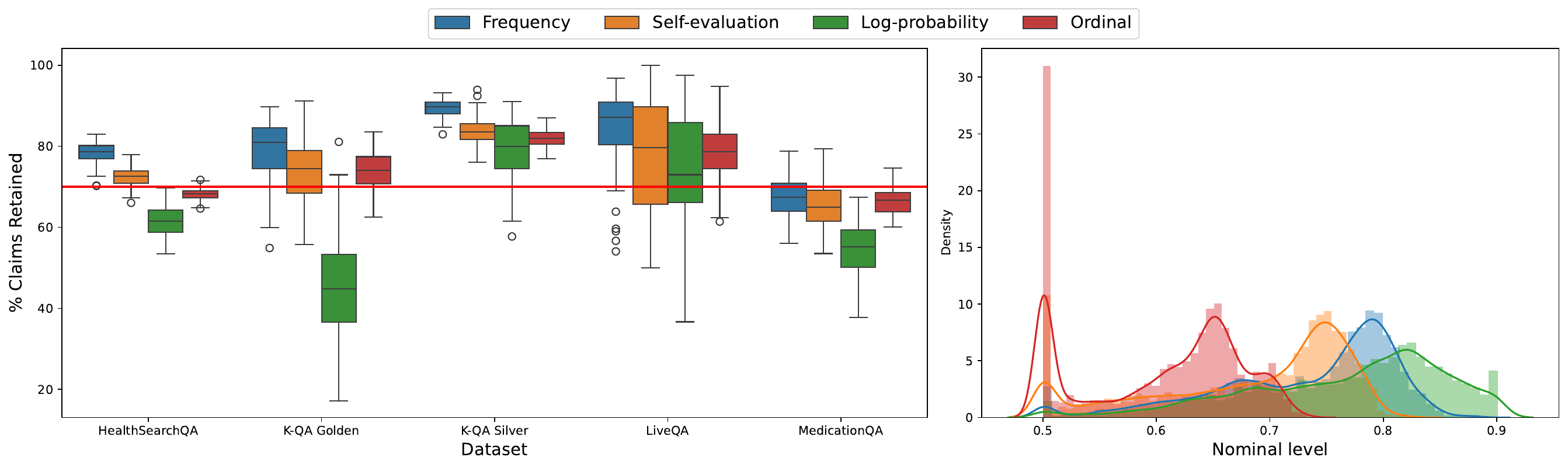}
    \caption{The set-up is identical to that of \Cref{fig:ablation-med}. Here we do not consider boosted scores, but display the performance of the level-adaptive method for each claim scoring approach in isolation.}
    \label{fig:med_claimscores}
\end{figure}

\subsection{Additional results for \Cref{sec:wiki}}\label{sec:app_experiment_wiki}
\paragraph{Additional details for the experiment set-up} To define features for the conditional conformal function class, we obtain page metadata from the Wikidata API. The metadata we use is the number of views of the biography's Wikipedia article in January 2023. Since this data is quite right-skewed, the view counts for the most popular articles can cause numerical issues in our optimizer. Consequently, we trim the largest entries to the $0.95$-quantile of the view data. 

Here, we also provide some additional details regarding the function classes used in our biography experiments. In all examples, we aim to provide uniform coverage guarantees regardless of the underlying popularity of the topic. In the fixed-level filtered biographies, we run the conditional conformal method with $\mathcal{F} = \{\beta_0 + \sum_{i = 1}^3 \beta_i v^i \mid \beta \in \R^4\}$, where $v$ denotes the number of views of the corresponding Wikipedia page in January 2023. In the level-adaptive examples, we use a nearly identical workflow to the previous section. The only difference is that here we estimate $\alpha(\cdot)$ using a $0.75$-quantile regression over $\mathcal{F}$ (see \Cref{fig:factscore_alpha_est} for a plot of this estimate). Additionally, in the final level-adaptive fit, we add a linear term to this function class given by $\beta_4 (\alpha(\cdot) - 0.1)^2$. Note that in our experiments, we truncate $\alpha(\cdot)$ to lie above $0.1$. As a result, a substantial proportion of outputs receive that estimated quantile and thus, $(\alpha(\cdot) - 0.1)^2$ correlates with the more variable portion of the quantile function. 

Though the first two panels of \Cref{fig:alpha_calibration_wiki} and \Cref{fig:factscore-claim=cov-comparison} are run using the more performant frequency score, the right panel of \Cref{fig:alpha_calibration_wiki} is obtained by finding an optimal linear combination of the three cheaper-to-obtain claim scores (self evaluation, token probability, ordinal). We obtain the displayed result by running $1000$ steps of Adam (with PyTorch defaults, i.e., learning rate $0.001$) through the conditional conformal procedure given by the same $\mathcal{F}$ described in the previous paragraph. 

\paragraph{Additional experiments}
\Cref{fig:ablation-fact,fig:ablation-fact_0} decompose the effect of each method on claim retention and calibration for the \textbf{FActscore} benchmark. Notably, we observe that optimally ensembling the 3 cheaper-to-compute claim scores using conditional boosting improves claim retention and/or shifts the distribution of issued guarantees to the right. Furthermore, regardless of the guarantee issued, the third panel of both of these figures confirms that our method is well-calibrated. Finally, as the second panel of \Cref{fig:ablation-fact_0} confirms, if we aim to guarantee $0$ false claims in the final output, it is unlikely that we issue a nominal probability above $50\%$. This motivates our choice to instead guarantee that the filtered output contains at most three false claims.

\begin{figure}[ht]
    \centering
    \includegraphics[width=\textwidth]{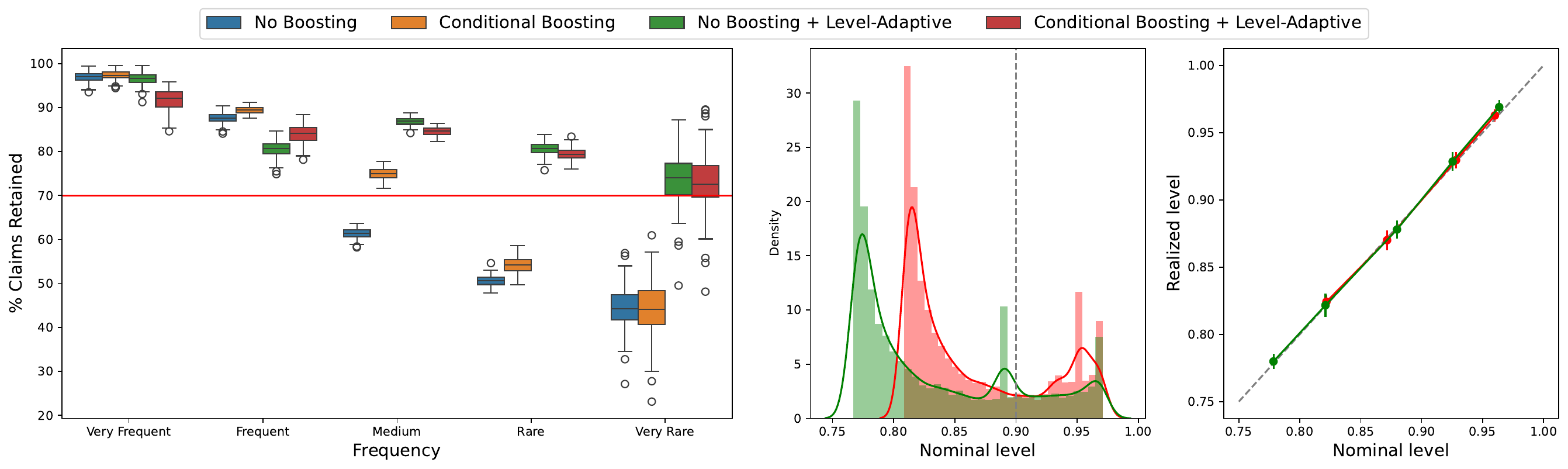}
    \caption{This figure replicates the previous ablation study (namely \Cref{fig:ablation-med}) for the \textbf{FActscore} dataset of \citet{min2023factscore}. For the sake of brevity, we only describe the differences compared to \Cref{fig:ablation-med}. The frequency of each biography topic varies; the displayed groups correspond to Wikipedia view counts in Jan. 2023 that are binned into the intervals $[1000000, \infty), [100000, 1000000), [1000, 100000), [100, 1000), [0, 100)$. The function class used for calibration is defined by the linear combination of these bin indicators and the view count of each Wikipedia article. We run $100$ trials in which we resample a boosting/$\alpha(\cdot)$-estimation split ($n = 847$), calibration split ($n = 5338$), and test split ($n = 500$). All methods are set-up to ensure that (with high probability) the final output contains no more than $3$ false claims.}
    \label{fig:ablation-fact}
\end{figure}

\begin{figure}[ht]
    \centering
    \includegraphics[width=\textwidth]{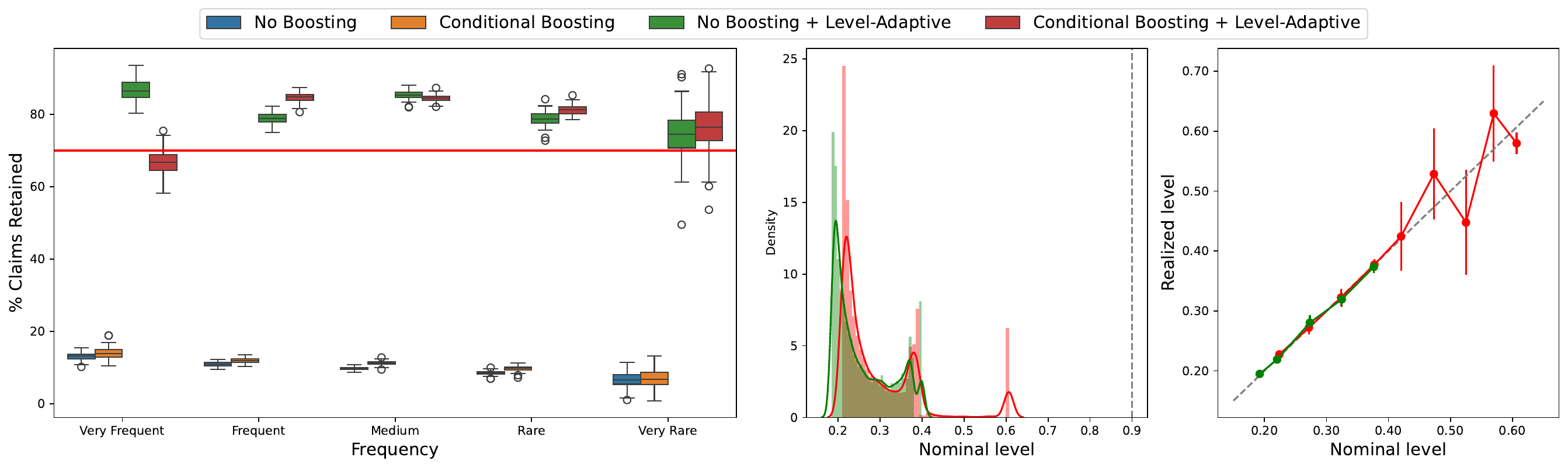}
    \caption{The set-up is identical to that of \Cref{fig:ablation-fact}, but now all methods are set-up to ensure that (with high probability) the final output contains no more than $0$ false claims.}
    \label{fig:ablation-fact_0}
\end{figure}

Using the same experimental set-up as the ablations, \Cref{fig:comparison-claimscores-0,fig:comparison-claimscores-3} compare the effect of the choice of claim scoring method on the nominal guarantee offered by the level-adaptive procedure. Note that once again the log-probability and frequency claim scores appear to offer the strongest set of guarantees.

\begin{figure}
    \centering
    \includegraphics[width=\textwidth]{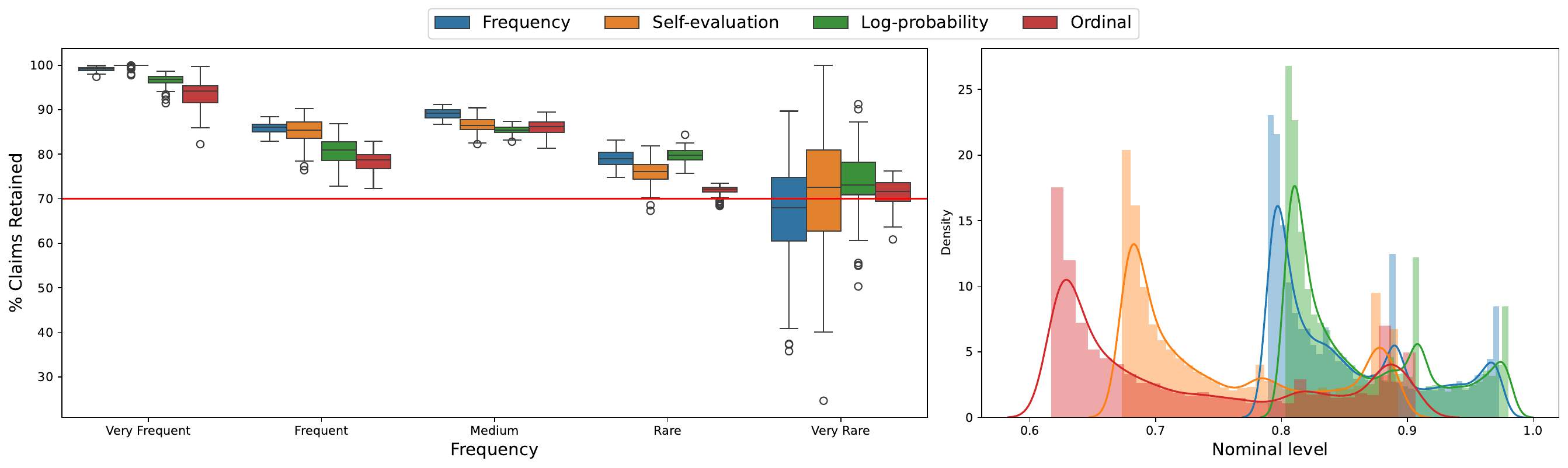}
    \caption{The set-up is identical to that of \Cref{fig:ablation-fact}. Here we do not consider boosted scores, but display the performance of the level-adaptive method for each claim scoring approach in isolation.}
    \label{fig:comparison-claimscores-3}
\end{figure}

\begin{figure}
    \centering
    \includegraphics[width=\textwidth]{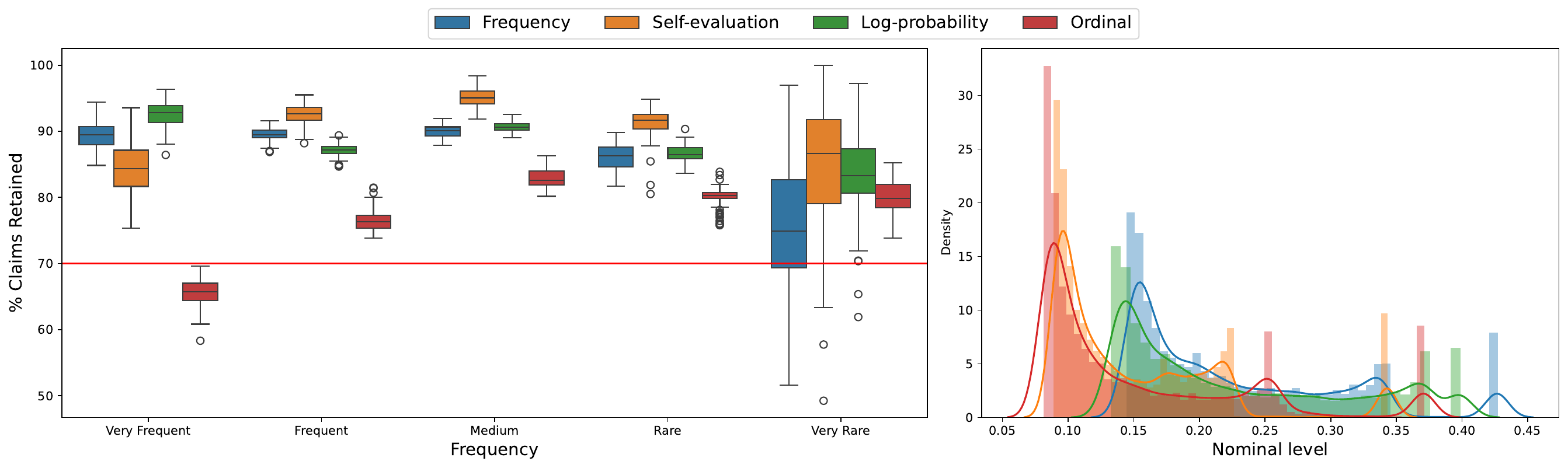}
    \caption{The set-up is identical to that of \Cref{fig:ablation-fact_0}. Here we do not consider boosted scores, but display the performance of the level-adaptive method for each claim scoring approach in isolation.}
    \label{fig:comparison-claimscores-0}
\end{figure}

\section{Additional scoring details}

\subsection{Claim scores} \label{sec:app_claim_score}
We collect several features to use for estimating a confidence level in each claim. Unless otherwise noted, the claim scoring function used in this paper is defined by the ``frequency'' score examined by \citet{mohri2024language}. We compute this score by querying GPT-3.5-Turbo 5 times at a temperature of $T = 5$. For each claim, we re-query the LLM to ask if the claim is supported in the resampled generation. These scores which can be $-1$ for contradicted, $0$ for not present, and $1$ for supported are then averaged over the $5$ responses. For example, a claim that is present in three out of the five responses, but absent in the other two would receive a score of $0.6$.

Since the frequency score requires many API queries, we also investigate several cheaper and faster-to-compute methods for claim scoring. Following \citet{mohri2024language}, we also collect the self-reported probability obtained by directly asking the LLM to report its estimated probability of claim correctness to three significant figures. We also compute the ordinal score, i.e., the order in which the claim appeared in the original response. Going beyond \citet{mohri2024language}, we compute the LLM's internal probability of each claim by querying the model with the list of claims and asking it to output a single-character (T or F) assessment of its beliefs. The internal probability of the model can then be computed by exponentiating the log probability of the returned token.

\subsection{Prompts} \label{sec:app_prompts}
To parse the initial response into scorable sub-claims, we prompt the LLM (GPT-4o, in our experiments). After first parsing the initial response into its component sentences, we copy the method of \citet{min2023factscore} (made available on GitHub under an MIT license) and use in-context learning to improve the parsing accuracy of the LLM. We first use the BM25 ranking function to match our sentences to similar examples previously parsed by \citet{min2023factscore}. We prepend these gold-standard examples as well as the new sentence to be parsed to the prompt,
\begin{quotation}
    Please breakdown the following sentence into independent facts: [...]
\end{quotation}

To annotate the subclaims, we either retrieve the response given in the \text{MedLFQA} dataset or emulate \citet{min2023factscore} and retrieve Wikipedia passages that are most relevant (using the BM25 ranking function) to the subject of the biography. The prompt includes this passage followed by the previous claims in the output that have already been annotated to ensure self-consistency,
\begin{quotation}
    Answer the question about [...] based on the given context and your previous answers. Title: [...] Text: [...] 
    Previous input: [...] True or False? Output: [...]
    Input: [...] True or False? Output:
\end{quotation}

To evaluate the support of the claims in new response, we prompt the LLM using
\begin{quotation}
    You will get a list of claims and piece of text. For each claim, score whether the text supports, contradicts, or is unrelated to the claim. Directly return a jsonl, where each line is \text{"id":[CLAIM\_ID], "score":[SCORE]}. Directly return the jsonl with NO explanation or ANY other formatting. For the [SCORE], return 1 for supports, -1 for contradicts, and 0 for unrelated. The claims are: [...] The text is: [...]'
\end{quotation}

We obtain self-assessed probabilities by prompting the LLM using
\begin{quotation}
    You will get a list of claims and the original prompt that motivated these claims. 
    For each claim, assess the probability of correctness. 
    Directly return a jsonl, where each line is \text{"id":[CLAIM\_ID], "gpt-score":[SCORE]}. Directly return the jsonl with NO explanation or ANY other formatting. For the [SCORE], return the esimated probability of correctness to three significant figures. The original promptis: [...] The claims are: [...]
\end{quotation}

We obtain the internal probability of the model using a very similar prompt, 
\begin{quotation}
    You will get a list of claims and the original prompt that motivated these claims. For each claim, assess the correctness. Directly return a jsonl, where each line is \text{"id":[CLAIM\_ID], "gpt-bool":[BOOL]}. Directly return the jsonl with NO explanation or ANY other formatting. For the [BOOL], return "T" or "F" in quotes so that it is valid json. The original prompt is: [...] The claims are: [...]
\end{quotation}
\end{document}